\pgfplotsset{compat=1.17}
\pgfplotsset{compat=newest}
\def\eqref#1{equation~\ref{#1}}
\def\1{\bm{1}}
\def\ru{{\textnormal{u}}}
\def\rx{{\textnormal{x}}}
\def\rvx{{\mathbf{x}}}
\def\vx{{\bm{x}}}
\def\mA{{\bm{A}}}
\def\mH{{\bm{H}}}
\def\mI{{\bm{I}}}
\def\mX{{\bm{X}}}
\DeclareMathAlphabet{\mathsfit}{\encodingdefault}{\sfdefault}{m}{sl}
\SetMathAlphabet{\mathsfit}{bold}{\encodingdefault}{\sfdefault}{bx}{n}
\newcommand{\tens}[1]{\bm{\mathsfit{#1}}}
\def\tA{{\tens{A}}}
\def\gD{{\mathcal{D}}}
\def\gG{{\mathcal{G}}}
\def\sD{{\mathbb{D}}}
\theoremstyle{plain}
\newtheorem{example}{Example}
\newtheorem{proposition}{Proposition}
\theoremstyle{definition}
\newtheorem{definition}{Definition}
\theoremstyle{remark}
\newcommand{\E}{\mathbb{E}}
\newcommand{\R}{\mathbb{R}}
\newcommand{\Var}{\mathrm{Var}}
\newcommand{\parents}{Pa} 
\newcommand{\descendants}[1]{De(#1)}
\definecolor{r2}{RGB}{214,96,77}
\definecolor{r3}{RGB}{244,165,130}
\definecolor{r4}{RGB}{253,219,199}
\definecolor{w0}{RGB}{247,247,247}
\definecolor{b4}{RGB}{209,229,240}
\definecolor{b3}{RGB}{146,197,222}
\definecolor{b2}{RGB}{67,147,195}
\definecolor{b1}{RGB}{33,102,172}
\definecolor{b0}{RGB}{5,48,97}
\definecolor{bl}{RGB}{166,206,227}
\definecolor{bd}{RGB}{31,120,180}
\definecolor{gl}{RGB}{178,223,138}
\definecolor{gd}{RGB}{51,160,44}
\definecolor{pinkp}{RGB}{251,154,153}
\definecolor{redp}{RGB}{227,26,28}
\definecolor{ol}{RGB}{253,191,111}
\definecolor{od}{RGB}{255,127,0}
\definecolor{pl}{RGB}{202,178,214}
\definecolor{pd}{RGB}{106,61,154}
\definecolor{yellowp}{RGB}{255,255,153}
\definecolor{brownp}{RGB}{177,89,40}
\definecolor{PrimaryBlue}{RGB}{0,114,178}
\definecolor{SecondaryGreen}{RGB}{0,158,115}
\definecolor{TertiaryRed}{RGB}{204,37,41}
\definecolor{QuaternaryOrange}{RGB}{230,159,0}
\definecolor{LightGrey}{RGB}{184,184,184}
\definecolor{DarkSlate}{RGB}{76,76,76}
\definecolor{aqua}{RGB}{141,211,199}
\definecolor{lightyellow}{RGB}{255,255,179}
\definecolor{lavender}{RGB}{190,186,218}
\definecolor{salmon}{RGB}{251,128,114}
\definecolor{skyblue}{RGB}{128,177,211}
\definecolor{peach}{RGB}{253,180,98}
\definecolor{lightgreen}{RGB}{179,222,105}
\definecolor{pink}{RGB}{252,205,229}
\definecolor{teal}{RGB}{27,158,119}
\definecolor{burntorange}{RGB}{217,95,2}
\definecolor{purple}{RGB}{117,112,179}
\definecolor{magenta}{RGB}{231,41,138}
\definecolor{limegreen}{RGB}{102,166,30}
\definecolor{gold}{RGB}{230,171,2}
\definecolor{brown}{RGB}{166,118,29}
\definecolor{grey}{RGB}{102,102,102}
\tikzset{
    block/.style={
        draw,
        fill=lightgreen,
        text=black,
        rectangle,
        align=center, 
        minimum height=4em,
        minimum width=6em
    },
    encoder/.style={
        draw,
        fill=peach,
        text=black,
        trapezium,
        align=center, 
        minimum height=4em,
        minimum width=6em
    },
    decoder/.style={
        draw,
        fill=peach,
        text=black,
        trapezium,
        align=center,
        minimum height=4em,
        minimum width=6em
    },
    input/.style={
        draw,
        fill=aqua,
        text=black,
        rectangle,
        minimum height=4em,
        minimum width=2em
    },
    output/.style={
        draw,
        fill=skyblue,
        text=black,
        rectangle,
        minimum height=4em,
        minimum width=2em
    },
    latent/.style={
        draw,
        fill=lavender,
        text=black,
        rectangle,
        minimum height=4em,
        minimum width=2em
    },
    arrow/.style={->,>=stealth},
    arrow scm/.style={->, thick},
    arrow bend scm/.style={->, thick, bend right=-60},
    arrow spurious scm/.style={->, thick, color=salmon},
    arrow spurious bend scm/.style={->, thick , bend right=60, color=salmon},
    dashed arrow scm/.style={->, thick, dashed},
    bold arrow/.style={->, line width=0.7mm, shorten >=2pt, bend right=-60},
    endogenous node/.style={draw, regular polygon, regular polygon sides=8, thick, fill=skyblue, minimum size=0.6cm, inner sep=0pt},
    endogenous hidden node/.style={draw, regular polygon, regular polygon sides=8, thick, fill=skyblue!20, minimum size=1cm, inner sep=0pt},
    exogenous node/.style={draw,regular polygon, regular polygon sides=8, thick, fill=white, minimum size=1cm, inner sep=0pt},
}
\newcommand{\rev}[1]{{\color{black}#1}}
\author{\name Pedro P. Sanchez\thanks{Equal contribution.} \email pedro.sanchez@ed.ac.uk \\ \addr School of Engineering, University of Edinburgh, UK \AND
        \name Damian Machlanski\footnotemark[1] \email d.machlanski@ed.ac.uk \\ \addr School of Engineering, University of Edinburgh, UK \\ Causality in Healthcare AI Hub (CHAI), UK
        \AND
      \name Steven McDonagh \email s.mcdonagh@ed.ac.uk \\ \addr School of Engineering, University of Edinburgh, UK \\ Causality in Healthcare AI Hub (CHAI), UK
      \AND
      \name Sotirios A. Tsaftaris \email s.tsaftaris@ed.ac.uk \\      
      \addr School of Engineering, University of Edinburgh, UK \\ Causality in Healthcare AI Hub (CHAI), UK}
\newtheorem{assumption}{Assumption}
\newcommand{\METHODNAME}{DOTS}
\newcommand{\diffusionmodel}{{\boldsymbol{\epsilon}_{\theta}}}
\newcommand{\bftab}{\fontseries{b}\selectfont}
\title{Causal Ordering for Structure Learning from Time Series}
\begin{document}

\maketitle

\begin{abstract}
Predicting causal structure from time series data is crucial for understanding complex phenomena in physiology, brain connectivity, climate dynamics, and socio-economic behaviour. Causal discovery in time series is hindered by the combinatorial complexity of identifying true causal relationships, especially as the number of variables and time points grows. A common approach to simplify the task is the so-called ordering-based methods. Traditional ordering methods inherently limit the representational capacity of the resulting model. In this work, we fix this issue by leveraging multiple valid causal orderings, instead of a single one as standard practice. We propose \METHODNAME\ (\textbf{D}iffusion \textbf{O}rdered \textbf{T}emporal \textbf{S}tructure), using diffusion-based causal discovery for temporal data. By integrating multiple orderings, \METHODNAME\ effectively recovers the transitive closure of the underlying directed acyclic graph (DAG), mitigating spurious artifacts inherent in single-ordering approaches. We formalise the problem under standard assumptions such as stationarity and the additive noise model, and leverage score matching with diffusion processes to enable efficient Hessian estimation. Extensive experiments validate the approach. Empirical evaluations on synthetic and real-world datasets demonstrate that \METHODNAME\ outperforms state-of-the-art baselines, offering a scalable and robust approach to temporal causal discovery. On synthetic benchmarks spanning $d{=}3\!-\!6$ variables, $T{=}200\!-\!5{,}000$ samples and up to three lags, \METHODNAME\ improves mean window‑graph $F1$ from $0.63$ (best baseline) to $0.81$. On the CausalTime real‑world benchmark (\textit{Medical}, \textit{AQI}, \textit{Traffic}; $d{=}20\!-\!36$), \rev{while baselines remain the best on individual datasets}, \METHODNAME\ attains the highest average summary‑graph $F1$ while halving runtime relative to graph‑optimisation methods. These results establish \METHODNAME\ as a scalable and accurate solution for temporal causal discovery. Code is available at \url{https://github.com/CHAI-UK/DOTS}.

\end{abstract}

\section{Introduction}

Understanding cause-effect relationships from time series data is essential in fields like biology~\citep{marbach2009generating}, neuroscience~\citep{friston2003dynamic}, climate science~\citep{Runge2019}, and economics~\citep{pamfil2020dynotears}, where uncovering how one event influences another can lead to valuable insights and better predictions. A key challenge in temporal causal discovery~\citep{granger1969investigating,peters2013causal,nauta2019causal,runge2020discovering} is the combinatorial complexity—there are many possible ways in which variables can influence each other, making it difficult to identify the true causal structure. The goal is to discover a temporal Directed Acyclic Graph (DAG), $\gG$, representing these relationships. The core problem is illustrated in Figure~\ref{fig:problem_overview}. 

Causal ordering approaches \citep{Verma1990CausalExpressiveness,Friedman2003BeingNetworks,buhlmann2014cam,Rolland2022ScoreModels,sanchez2023diffusion} offer a scalable alternative to direct graph estimation by constraining the search space, reducing it from a full adjacency matrix to a set of ordered permutations. While this reduction scales efficiently with respect to the number of variables and samples, it compromises representational power: a single causal ordering can imply extra edges, spurious artifacts, that are absent in the original DAG. In other words, committing to a single arbitrary ordering has an inherent downside: every node is deemed a potential ancestor of \emph{all} subsequent nodes, creating spurious edges that must be pruned heuristically. To mitigate this, existing methods employ a two-step process, wherein a feature selection post-processing step prunes spurious artifacts introduced by the ordering.

Crucially, a DAG generally admits \emph{multiple} valid orderings consistent with edge directions. Each ordering contributes complementary information about the underlying causal structure. By systematically generating and aggregating information from diverse orderings, we can filter out the spurious artifacts specific to any single, arbitrary ordering choice. As we aggregate more orderings, we converge towards the transitive closure ($\gG^+$) of the underlying temporal DAG. $\gG^+$ contains all true direct edges present in $\gG$. While $\gG^+$ also includes edges representing indirect causal pathways, it represents the stable, necessary ancestral relationships to recover the true graph. Therefore, harnessing several orderings increases both precision and recall and does so without incurring the combinatorial cost of full graph search. Our central insight is to embrace this multiplicity rather than fight it.

\begin{figure}[b]
\centering
\begin{tikzpicture}[
    font=\footnotesize,
    title/.style={font=\bfseries\small, anchor=base west},
    dagnode/.style={circle, draw, minimum size=8mm, inner sep=0pt},
    edge/.style={-{Latex[length=2.5pt, width=5pt]}, thick},
    flow/.style={-{Triangle[length=2.5pt, width=5pt]}, draw=gray!70, line width=1pt},
    note/.style={font=\scriptsize\sffamily, text=gray!75}
]

\coordinate (P1) at (-3,0);      
\coordinate (P2) at (5.5,0);    

\node[title] at ($(P1)+(-0.4,3)$) {Input \textbf{\sffamily$\rvx^{T\times d}$}};

\begin{scope}[shift={($(P1)+(0,-1)$)}, yshift=.35cm, scale=2.5] 
    \draw[gray!40] (-1.1,-.3) rectangle (1.1,1.25);
    \draw[gray!30] (-1.1,0) -- (1.1,0); 
    \draw[gray!30] (0,-.3) -- (0,1.25); 

    \draw[blue, opacity=.7, smooth] plot coordinates{(-1,.45) (-.55,.05) (0,.70) (.55,.20) (1,.60)};
    \draw[green!60!black, opacity=.7, smooth] plot coordinates{(-1,.12) (-.55,.62) (0,0) (.55,.44) (1,.24)};
    \draw[orange, opacity=.7, smooth] plot coordinates{(-1,.72) (-.55,.38) (0,.38) (.55,.78) (1,.06)};
    
    \node[font=\tiny] at (0,-.48) {Time ($t$)};
    
    \coordinate (PlotEast) at (1.1, 0.475); 
\end{scope}


\node[title] at ($(P2)+(-1,3)$) {Target $\mathcal{G}$}; 

\matrix (M) [matrix of math nodes,
             nodes={dagnode},
             column sep=12mm, 
             row sep=8mm,     
             ampersand replacement=\&] at ($(P2)+(0,0.5)$) {
  x_1^{t}   \& x_1^{t+1} \& x_1^{t+2} \\
  x_2^{t}   \& x_2^{t+1} \& x_2^{t+2} \\
  x_3^{t}   \& x_3^{t+1} \& x_3^{t+2} \\
};

\foreach \r in {1,2,3}{
  \draw[edge] (M-\r-1) -- (M-\r-2);
  \draw[edge] (M-\r-2) -- (M-\r-3);
}
\draw[edge] (M-1-1) -- (M-2-1);
\draw[edge] (M-2-1) -- (M-3-1);
\draw[edge] (M-1-1) -- (M-2-2);
\draw[edge] (M-2-1) -- (M-3-2);
\draw[edge] (M-1-2) -- (M-2-3);
\draw[edge] (M-2-2) -- (M-3-3);

\coordinate (ArrowStart) at ($(P1) + (3.1, 0.5)$); 
\coordinate (ArrowEnd)   at ($(M-2-1.west) + (-0.2, 0)$); 

\draw[flow] (ArrowStart) -- (ArrowEnd)
      node[midway, above=2pt, note] {Learn structure};

\end{tikzpicture}
\caption{Temporal causal discovery estimates, from raw time series data (\textit{left}), the underlying temporal causal DAG $\mathcal{G}$ (\textit{right}).}
\label{fig:problem_overview}
\end{figure}
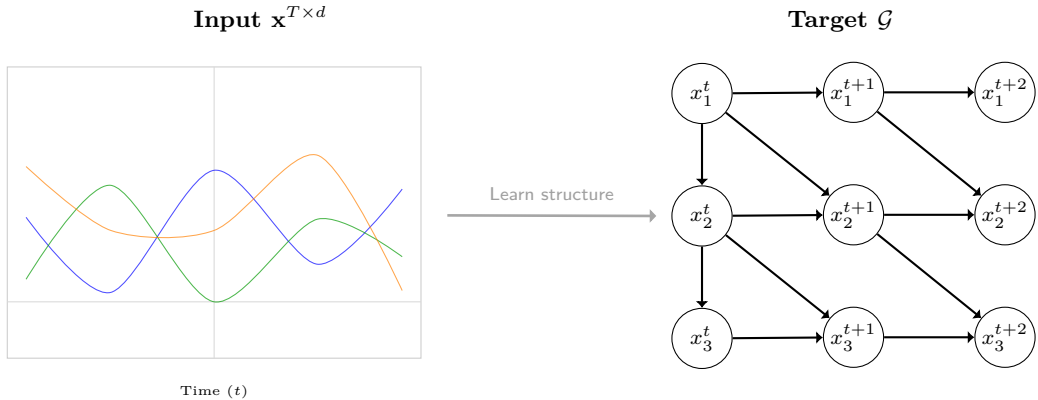

The benefit of multiple orderings hinges on access to a \emph{diverse} collection of valid orderings.  Naïve resampling or greedy heuristics tend to revisit near-duplicate permutations. We address this limitation by drawing on \textbf{denoising diffusion models}~\citep{Song2019GenerativeDistribution,ho2020denoising,sanchez2023diffusion}. Once trained to approximate the data score, a single diffusion network delivers Hessian estimates at \emph{many noise scales}; applying a leaf-detection rule at each scale yields a fresh ordering. Because different scales emphasise different frequency bands of the data, the resulting orderings cover the search space more uniformly. Moreover, diffusion training amortises computation—after one network fit we can sample hundreds of orderings without retraining—making the approach viable for datasets that are large w.r.t.\ variables and samples. Finally, causal ordering approaches have not been extensively explored for temporal causal discovery, but their principles naturally extend to time-dependent data. \rev{Our focus on the temporal setting is motivated by two key advantages: (i) temporal constraints provide additional structural information that enhances ordering reliability through the temporal priority principle, and (ii) the temporal priority principle offers a natural filtering mechanism for spurious edges.} The temporal dimension provides an inherent set of causal orderings, each contributing incremental insights.

Building on these ideas, we introduce \METHODNAME\ (\textbf{D}iffusion‑\textbf{O}rdered \textbf{T}emporal \textbf{S}tructure), \rev{a theoretically motivated ensemble-based approach for temporal causal discovery that systematically aggregates multiple diffusion‑generated orderings}. Our contributions are three‑fold:
\begin{enumerate}[label=(\roman*)]
\item We establish theoretically and empirically that aggregating multiple causal orderings to estimate the transitive closure ($\gG^+$) provides a more robust foundation for temporal causal discovery than traditional single-ordering approaches, effectively filtering spurious edges arising from arbitrary ordering choices.
\item We introduce \METHODNAME, a novel algorithm leveraging multi-scale diffusion models to efficiently generate the diverse set of causal orderings required for robust aggregation in the temporal domain.
\item We provide extensive benchmarks, adapting several static ordering methods for temporal data and demonstrating the superior performance of the multi-ordering strategy implemented by \METHODNAME\ against both these adapted methods and state-of-the-art temporal causal discovery baselines on synthetic and real-world datasets.
\end{enumerate}

The rest of the paper is organised as follows. Section~\ref{sec:preliminaries} reviews notation and identifiability assumptions. Section~\ref{sec:multiple_orderings} establishes the theoretical foundations of multi‑ordering aggregation, and Section~\ref{sec:methodology} presents the \METHODNAME\ algorithm in detail. Section~\ref{sec:exps} shows experimental results. Relevant literature is discussed in Section~\ref{sec:related}. Section~\ref{sec:conclusion} concludes the paper.

\section{Preliminaries}
\label{sec:preliminaries}
\subsection{Notation}
\label{subsec:nomenclature}

We present in Table~\ref{tab:notation} a description of all the symbols in the manuscript.

\begin{table}[ht]
\centering
\small
\caption{Summary of notation used throughout the paper.}
\begin{tabular}{@{}ll@{}}
\toprule
\textbf{Symbol} & \textbf{Meaning / role in the paper} \\
\midrule
$\rvx \in \mathbb{R}^{d}$ & Random vector of $d$ variables; component $i$ is $\rx_i$. \\
$x_i^{t}$, $\rvx^{t}$ & Value / vector at time index $t$. \\
$d$ & Number of variables (dimensionality). \\
$T$ & Number of observed time steps. \\
$\tau$ & Time lag; $\rx_i^{t-\tau}\!\to\!\rx_j^{t}$. \\
$\tau_{\max}$ & Maximum lag included in $\tA$. \\
$k \in \{0,\dots,k_{\max}\}$ & Diffusion (noise-scale) timestep. \\
$k_{\max}$ & Final diffusion step (fully noised). \\
$\pi$ & Causal ordering (topological permutation). \\
$\pi_i$ & Variable at position $i$ in ordering $\pi$. \\
$\gG$ & (Temporal) causal DAG. \\
$\gG^{+}$ & Transitive closure of $\gG$. \\
$\tA \in\{0,1\}^{d(\tau_{\max}+1)\times d(\tau_{\max}+1)}$ & Temporal adjacency matrix to be learned. \\
$p(\rvx)$ & Data distribution (density). \\
$\diffusionmodel(\rvx,k)$ & Neural network estimating $\nabla_{\rvx}\log p(\rvx)$ at scale $k$. \\
$\tilde{\rvx}^{k}$ & Noisy version of $\rvx$ at diffusion step $k$. \\
$q(\tilde{\rvx}^{k}\!\mid\!\rvx,k)$ & Forward noising distribution. \\
$f_j$ & Structural function generating $\rx_j$ from its parents. \\
$\epsilon_j^{t}$ & Independent noise term for $\rx_j^{t}$. \\
$\parents_{\gG^{t}}(\rx_j^{t})$ & Parent set of $\rx_j^{t}$ in the temporal DAG. \\
$Ch(\rx_j)$ & Children of node $\rx_j$. \\
$\mH_{i,j}\!\bigl(\log p(\rvx)\bigr)$ & $(i,j)$-entry of Hessian of $\log p(\rvx)$ (score Jacobian). \\
$E(\pi)$ & Edge set implied by ordering $\pi$. \\
$E_{\mathrm{agg}}^{(m)}$ & Intersection of edge sets from $m$ sampled orderings. \\
$\mathcal{T}$ & Set of temporally valid edges (causes precede effects). \\
$E_{\mathrm{agg}}^{(m,T)}$ & Aggregated edge set restricted to $\mathcal{T}$. \\
$W_{ij}$ & Fraction of sampled orderings where edge $i\!\to\!j$ appears (vote matrix). \\
$\tilde{A}_{ij}$ & Soft transitive-closure entry after thresholding $W$. \\
$\theta$ & Threshold for vote-matrix aggregation ($0<\theta\le1$). \\
$\alpha_{k}$ & Variance-preserving coefficient in the forward diffusion process. \\
\bottomrule
\end{tabular}
\label{tab:notation}
\end{table}

\subsection{Problem Definition}
We aim to discover the causal structure among \(d\) variables arranged in a temporal setting. Let \(\rvx^t \in \R^d\) denote a vector-valued random variable at time \(t\), with components \(\bigl(\rx_1^t,\, \ldots,\, \rx_d^t\bigr)\). A \emph{temporal directed acyclic graph} (temporal DAG) \(\gG^t\) then describes the causal relationships among these variables, where each node corresponds to a component \(\rx_i^t\), and the directed edges represent causal links between variables at the same time or across time steps \citep{Eichler2012}. We now introduce standard assumptions for causal discovery from time series:

\begin{assumption}[Temporal DAG]
\label{ass:dag_temporal}
The true causal structure of the data can be represented by a temporal DAG \(\gG^t\). This graph includes lagged edges of the form \(\rx_i^{t-\tau} \to \rx_j^t\) (for \(\tau > 0\)) and contemporaneous edges \(\rx_i^t \to \rx_j^t\). This assumption effectively captures the \emph{temporal priority principle} \citep{Hume1904HUMECH,10.3389/fpsyg.2013.00178,assaad2022survey} where causes precede effects in time. For any pair \(\bigl(\rx_i^{t-\tau},\,\rx_j^t\bigr)\), an edge \(\rx_i^{t-\tau} \to \rx_j^t\) indicates that the value of \(\rx_i\) at time \((t-\tau)\) influences \(\rx_j\) at time \(t\). This forbids backward-in-time causation and helps simplify structure learning since the search space is smaller.
\end{assumption}

\begin{assumption}[Stationarity]
\label{ass:stationarity}
The causal relationships in \(\gG^t\) remain invariant over time; that is, both the causal links and their strengths remain unchanged for all values of \(t\).
\end{assumption}

\begin{assumption}[Time Series Models with Independent Noise (TiMINo)]
\label{ass:anm}
The structural causal model follows a temporal additive noise formulation:
\begin{equation}
\label{eq:anm}
    \rx_j^t \;=\; f_j\Bigl(\parents_{\gG^t}(\rx_j^t)\Bigr) \;+\; \epsilon_j^t,
\end{equation}
where \(\parents_{\gG^t}(\rx_j^t)\) denotes the set of parent variables of \(\rx_j^t\) in \(\gG^t\), \(f_j\) is a nonlinear function, and \(\epsilon_j^t\) is an independent noise term. TiMINo \citep{peters2013causal} essentially extends the additive noise model (ANM) framework to time series.
\end{assumption}

\begin{assumption}[Causal Sufficiency]
\label{ass:causal_sufficiency}
All common causes of observed variables are measured; that is, there are no unobserved confounders that influence multiple components simultaneously.
\end{assumption}

\rev{
\begin{assumption}[Noise Distribution Regularity]
\label{ass:noise_regularity}
The exogenous noise terms $\epsilon_j^t$ in Equation~\ref{eq:anm} have densities $p^{\ru}$ such that $\frac{\partial^2 \log p^{\ru}}{\partial x^2}$ is constant. This includes Gaussian noise as a special case but also encompasses other distributions with quadratic log-densities. This regularity condition is required for the score-matching approach to satisfy the leaf-detection criterion in Equation~\ref{eq:var_hessian} and extends the assumptions of \citet{sanchez2023diffusion} to the temporal domain using the TiMINo identifiability framework \citep{peters2013causal}.
\end{assumption}
}
\subsection{Objective}
Given an observational multivariate time series dataset \(\sD \in \mathbb{R}^{T \times d}\) containing \(T\) timesteps of \(d\) variables, our goal is to learn the temporal adjacency structure \(\tA \in \{0,1\}^{d(\tau_{\max}+1) \times d(\tau_{\max}+1)}\), where $\tau_{\max}$ denotes the maximum lag that is being captured in the adjacency matrix. Each entry of \(\tA\) encodes whether there is a directed edge from \(\rx_i^{t-\tau}\) to \(\rx_j^t\) for \(0\leq \tau \leq \tau_{\max}\). Note that \(\tau=0\) indicates a contemporaneous link \(\rx_i^t \to \rx_j^t\).

\subsection{Key Identifiability Results}
Under stationarity, the additive noise assumption, and causal sufficiency, \citet{peters2013causal} show that temporal causal relationships become identifiable if the data follow a restricted structural equation model in which each noise term is statistically independent and no directed cycles exist within a single time slice. Specifically, their \emph{Time Series Models with Independent Noise} (TiMINo) framework demonstrates that both lagged and instantaneous effects can be recovered uniquely, provided the functional form and noise distributions meet certain identifiability criteria (e.g. linear non-Gaussian or nonlinear Gaussian).

\subsection{Causal ordering}

Causal search over the space of DAGs is an NP-hard problem \citep{chickering1996learning}. Traditional approaches leverage heuristic search strategies to navigate the combinatorial space of potential DAG structures. Order-based search offers a simpler and more effective alternative. By shifting the search from graph structures to node orderings, the strategy exploits the fact that, for a given ordering, identifying the highest-scoring network is not NP-hard. Such causal ordering approaches reduce the search space and inherently satisfy the acyclicity constraints. This bypasses the need for explicit acyclicity checks during the search. These methods find a particular causal ordering of the nodes, i.e. a list of nodes such that a node in the ordering can be a parent only of the nodes appearing after it in the exact ordering. Causal ordering is also known as topological ordering or a causal list in the causal discovery literature \citep{Peters2017ElementsInference}. Formally, causal ordering of a DAG $\gG$ is defined as a non-unique permutation $\pi$ of $d$ nodes. Hence, a given node in $\pi$ always appears before its descendants in the list. Or more formally,  $\pi_i < \pi_j \iff j \in  \descendants{\rx_i}$ where $\descendants{\rx_i}$ are the descendants of the $ith$ node in $\gG$ (Appendix B in \citet{Peters2017ElementsInference}).

\subsection{Causal ordering via score matching}

\citet{Rolland2022ScoreModels} propose that the score of an ANM with distribution $p(\rvx)$ can be used to estimate the causal ordering by finding leaves. Leaves are nodes of DAG $\gG$ that do not have children. \citet{Rolland2022ScoreModels} propose a method to find leaves based on the derivative of the ANM log density (also called \textit{score}). An analytical expression for the score of an ANM from Equation~\ref{eq:anm} is
\begin{equation}
\label{eq:score_analytic}
\begin{aligned}
    \nabla_{\rx_j} \log p(\rvx) 
    &= \frac{\partial \log p^{\ru} \left(\rx_j - f_j \right)}{\partial \rx_j}  - \sum_{i \in Ch(\rx_j)}  \frac{\partial f_i}{\partial \rx_j} \frac{\partial \log p^{\ru} \left(\rx_i - f_i \right)}{\partial x}, &
\end{aligned}
\end{equation}
where $Ch(\rx_j)$ denotes the children of $\rx_j$. Using this analytical equation of $\nabla_{\rx_j} \log p(\rvx)$, \citet{Rolland2022ScoreModels} derive the following condition used to find leaf nodes. Given a nonlinear ANM with a noise distribution $p^\ru$ and a leaf node $j$, assuming that $\frac{\partial^2 \log p^\ru}{\partial x^2} = a$, where $a$ is a constant, then
\begin{equation}
\label{eq:var_hessian}
    \Var_\gD \left[\mH_{j,j} (\log p(\rvx)) \right] = 0.
\end{equation}
This rule is based on the score's Jacobian (or Hessian of the log distribution). $\mH_{j,j} (\log p(\rvx))$ is used in \citet{Rolland2022ScoreModels} to propose a causal ordering algorithm that iteratively finds and removes leaf nodes from the dataset. \citet{Rolland2022ScoreModels} re-compute the score's Jacobian with a kernel-based estimation method at each iteration. 

\subsection{Approximating the score's Jacobian via diffusion training}

Instead of computing \(\log p(\rvx)\) via a kernel-based estimation \citep{Li2018GradientModels,Rolland2022ScoreModels}, we follow \citet{sanchez2023diffusion} and estimate the score's Jacobian with diffusion models \citep{Song2019GenerativeDistribution,ho2020denoising}. This estimation is based on a diffusion process that progressively corrupts \(\rvx\) with Gaussian noise over timesteps \(k \in \{0,\dots,k_{\max}\}\). Let \(\tilde{\rvx}^{k}\) be the noisy version of \(\rvx\) at diffusion step \(k\). A neural network \(\diffusionmodel(\tilde{\rvx}^{k}, k)\) is trained to denoise \(\tilde{\rvx}^{k}\) back to \(\rvx\), thereby approximating the true score \(\nabla_{\rvx} \log p(\rvx)\). Formally, this can be written as:
\[
  \E_{\rvx \sim p(\rvx), \ \tilde{\rvx}^k \sim q(\tilde{\rvx}^k \mid \rvx, k)} 
  \Bigl\| 
  \diffusionmodel\bigl(\tilde{\rvx}^k, k\bigr) \;-\; \nabla_{\tilde{\rvx}^k} \log p\bigl(\tilde{\rvx}^k \mid k\bigr)
  \Bigr\|^2,
\]
where \(q(\tilde{\rvx}^k \mid \rvx, k)\) defines the forward noising process. Once trained, \(\diffusionmodel\) effectively yields \(\nabla_{\rvx}\log p(\rvx)\) at various noise scales \(k\), which can be used to estimate the Hessian for causal discovery. Noise at multiple scales explores regions of low data density \citep{Song2019GenerativeDistribution}.

The score's Jacobian can be approximated by learning the score $\diffusionmodel$ with denoising diffusion training of neural networks and back-propagating \citep{Rumelhart1986LearningErrors}\footnote{The Jacobian of a neural network can be efficiently computed with auto-differentiation libraries such as functorch \citep{HoraceHe2021Functorch:PyTorch}.} from the output to the input variables. The quantity can be written, for an input data point $\vx \in \R^{d}$, as
\begin{equation}
\label{eq:diffusion_hessian}
\mH_{i,j} \log p(\vx) \approx \nabla_{i,j} \diffusionmodel(\vx, k),
\end{equation}
where $\nabla_{i,j} \diffusionmodel(\vx, k)$ means the $ith$ output of $\diffusionmodel$ is backpropagated to the $jth$ input. The diagonal of the Hessian in Equation \ref{eq:diffusion_hessian} can be used to find leaf nodes as in Equation \ref{eq:var_hessian}. We use masking \citep{sanchez2023diffusion} to iteratively find and remove leaf nodes, \emph{without} retraining the score.

\section{Theoretical motivation: temporal structure from multiple causal orderings}
\label{sec:multiple_orderings}

Causal ordering methods, illustrated in Figure~\ref{fig:causal_ordering}, traditionally rely on a single ordering to infer the full DAG in causal discovery. A single causal graph does not contain sufficient information to reliably infer the DAG, since each node is considered a cause for all subsequent nodes. Therefore, pruning methods are used to remove spurious edges~\citep{buhlmann2014cam}. With infinite data a perfect conditional‑independence oracle could delete the spurious edges and retain the true ones. In practice we face finite samples, noisy tests and high dimensionality. Starting from an over‑dense candidate set inflates both kinds of statistical error: \emph{(i) false positives} remain whenever a test \emph{fails} to reject a truly absent edge (type‑II error), and \emph{(ii) false negatives} appear when a test mistakenly deletes a true edge (type‑I error) because that edge co‑varies with many irrelevant ancestors in the initial ordering. Hence a single ordering often yields a fragile estimate whose quality varies wildly with sample size and noise level. In contrast, leveraging multiple causal orderings provides a richer representation of the underlying structure. Exploiting multiple valid causal orderings from data naturally follows from the fact that a given DAG typically admits more than one linear ordering, consistent with its structure.

\begin{figure}[hp]
    \centering
    \begin{tikzpicture}[auto]

\node[endogenous node] (x1) {$\rx_1$};
\node[endogenous node] (x2) [below left=of x1] {$\rx_2$};
\node[endogenous node] (x3) [below right=of x1] {$\rx_3$};
\node[endogenous node] (x4) [below right=of x2] {$\rx_4$};

\draw[arrow scm] (x2) -- (x1);
\draw[arrow scm] (x3) -- (x1);
\draw[arrow scm] (x2) -- (x4);

\node[endogenous node] (x2l) [right=3cm of x3] {$\rx_2$};
\node[endogenous node] (x4l) [right=of x2l] {$\rx_4$};
\node[endogenous node] (x3l) [right=of x4l] {$\rx_3$};
\node[endogenous node] (x1l) [right=of x3l] {$\rx_1$};


\draw[arrow scm] (x2l) -- (x4l);
\draw[arrow spurious scm] (x4l) -- (x3l);
\draw[arrow scm] (x3l) -- (x1l);
\draw[arrow bend scm] (x2l.north) to (x1l.north);
\draw[arrow spurious bend scm] (x2l.south) to (x3l.south);
\draw[arrow spurious bend scm] (x4l.south) to (x1l.south);

\draw[bold arrow] ($(x3.east)+(0.4cm,0.5cm)$) to node[midway, above] {DAG to ordered list} ($(x2l.west)+(-0.2cm,0.5cm)$);

\end{tikzpicture}
    \caption{A DAG can be represented as an ordered list following the causal direction. A node in the ordering can cause any subsequent node. Searching over the space of permutations is more efficient than searching over the 2D space of matrices. However, topologically sorting nodes reduces the amount of information in the representation of causal relationships.}
    \label{fig:causal_ordering}
\end{figure}
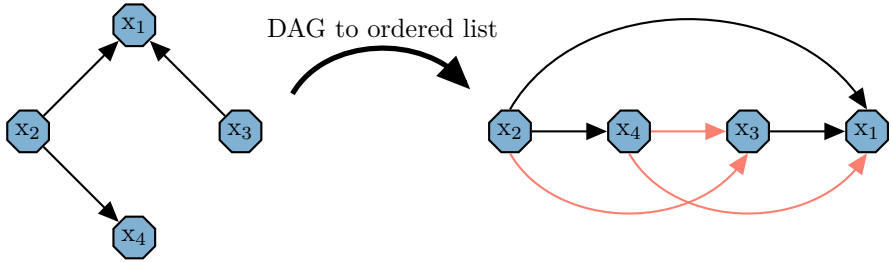

Rather than committing to a single topological sort, estimating multiple valid orderings can offer a more complete representation of ancestor–descendant relationships. In doing so, we can aggregate local adjacency constraints from each ordering and thereby recover, asymptotically, the \emph{transitive closure} of the DAG---the minimal set of edges that preserves causal reachability. This perspective avoids over-specifying the order of variables that are not causally linked and reduces the risk of introducing extra edges that do not exist in the underlying temporal DAG.

\subsection{Recovering DAG structure from a complete set of causal orderings}

We now investigate how the topological orderings of a DAG relate to its underlying structure. The central takeaway is that the collection of all topological orderings of a DAG uniquely determines the \emph{transitive closure} \(\gG^+\), which is a robust representation of the graph’s reachability structure. Topological orderings capture the reachability relation of the graph---a partial order---but multiple DAGs can share the same reachability relation and, consequently, the same set of topological orderings.

In a DAG \(\gG\), every topological ordering is a linear extension of the reachability relation \(R\). \rev{A key insight from order theory is formalised in \emph{Hiraguchi’s theorem} \citep{hiraguchi1955}, which establishes that any finite partial order can be exactly recovered as the intersection of all its linear extensions (i.e. total orders consistent with it). In our setting, this implies that aggregating multiple topological orderings of a DAG asymptotically recovers its transitive closure.} This idea is formalized in the following proposition:

\begin{proposition}[Reconstruction of the Transitive Closure]
\label{prop:transitive-closure}
Let \( G = (V, E) \) be a DAG, and let \( \mathcal{L} \) be the set of all its topological orderings. Define a binary relation \( \prec \) on \( V \) by:
\[
x \prec y \quad \Longleftrightarrow \quad x \text{ appears before } y \text{ in every } \pi \in \mathcal{L}.
\]
Then:
\begin{enumerate}[label=(\roman*)]
    \item \( \prec \) is a strict partial order on \( V \) (irreflexive and transitive).
    \item For all \( x, y \in V \),
    \[
    x \prec y \quad \Longleftrightarrow \quad \text{there exists a directed path from \( x \) to \( y \) in \( G \) with \( x \neq y \)}.
    \]
    Thus, \( \prec \) matches the edges of the transitive closure \( \gG^+ \).
    \item Consequently, aggregating all topological orderings in \( \mathcal{L} \) recovers \( \gG^+ \), but not necessarily the original DAG \( G \).
\end{enumerate}
\end{proposition}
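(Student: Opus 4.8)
The plan is to treat the three claims in order, since (ii) does all the work and (i) and (iii) are bookkeeping around it. For (i): irreflexivity is immediate because no node precedes itself in any single ordering; transitivity follows because \emph{appears before} is transitive inside each fixed $\pi$, and the defining condition quantifies over all $\pi\in\mathcal{L}$ simultaneously, so $\prec$ is simply the intersection of the family of strict total orders $\{\,(a,b): a \text{ before } b \text{ in }\pi\,\}$, $\pi\in\mathcal{L}$. An intersection of strict total orders is always a strict partial order; the only extra thing to note is that $\mathcal{L}\neq\emptyset$, which holds because every finite DAG admits at least one topological ordering.

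For (ii) I would prove the two implications separately. The direction \emph{path $\Rightarrow$ $\prec$} is direct: if $x=v_0\to v_1\to\cdots\to v_k=y$ is a directed path in $G$, then any topological ordering respects each arc $v_i\to v_{i+1}$ and hence places $x$ before $y$, so $x\prec y$. For the converse I argue the contrapositive: assuming $x\neq y$ and that $G$ has \emph{no} directed path from $x$ to $y$, I must exhibit a single $\pi\in\mathcal{L}$ placing $y$ before $x$, which witnesses $\neg(x\prec y)$. The construction is to augment $G$ with the arc $y\to x$, obtaining $G' = (V, E\cup\{y\to x\})$. Adding $y\to x$ can close a directed cycle only if $G$ already contained a path from $x$ to $y$ --- which by assumption it does not --- so $G'$ is still a DAG and therefore has a topological ordering; that ordering is also a topological ordering of $G$ (its arc set only got smaller) and by construction puts $y$ before $x$. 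The degenerate case $x=y$ is handled by irreflexivity, which is precisely why the statement carries the clause $x\neq y$. Combining the two implications, $\prec$ coincides with the reachability relation of $G$ off the diagonal, i.e.\ with the edge set of $\gG^{+}$.

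Claim (iii) is then a corollary: the common order constraints of all orderings in $\mathcal{L}$ --- what \emph{aggregating} the orderings yields --- are exactly $\gG^{+}$ by (ii). Non-recovery of $G$ needs only one example of two distinct DAGs with identical topological-ordering sets: take $G_1=(\{a,b,c\},\{a\to b,\,b\to c\})$ and $G_2=G_1\cup\{a\to c\}$; both have the single topological ordering $(a,b,c)$, yet $G_1\neq G_2$, and the common object returned in either case is the shared transitive closure $G_2=\gG^{+}$. The one genuinely content-bearing step --- and hence the main obstacle --- is the witness construction in the converse of (ii): recognising that \emph{non-reachability of $y$ from $x$} is exactly the condition under which the arc $y\to x$ can be consistently inserted, so that the existence of a topological ordering of the augmented DAG (an elementary fact, and a finite instance of Szpilrajn's extension theorem) delivers the required ordering. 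Everything else is routine.
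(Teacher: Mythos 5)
Your proposal is correct and follows essentially the same route as the paper: the forward direction via every ordering respecting directed paths, and the converse via adding the arc $y\to x$ (which cannot create a cycle absent a path from $x$ to $y$) and taking a topological ordering of the augmented DAG, with the same $\{a\to b,\ b\to c\}$ versus $\{a\to b,\ b\to c,\ a\to c\}$ example showing non-recovery of $G$. Your treatment of (i) as an intersection of strict total orders and the explicit note that $\mathcal{L}\neq\emptyset$ are minor additions the paper leaves implicit.
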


\begin{proof}[\rev{Justification}]
We show part (2) by establishing the equivalence:
\begin{itemize}
    \item (\( \Rightarrow \)): If there is a directed path from \( x \) to \( y \) in \( G \) with \( x \neq y \), then every topological ordering \( \pi \in \mathcal{L} \) must place \( x \) before \( y \) to respect the direction of edges. Hence, \( x \prec y \).
    \item (\( \Leftarrow \)): Suppose \( x \prec y \) but no directed path from \( x \) to \( y \) exists in \( G \) with \( x \neq y \). Since \( G \) is a DAG and no path exists from \( x \) to \( y \), adding the edge \( (y, x) \) to \( G \) does not introduce a cycle (otherwise, a path from \( x \) to \( y \) would exist, contradicting the assumption). In this modified DAG, there exists a topological ordering with \( y \) before \( x \), contradicting \( x \prec y \). Thus, a directed path from \( x \) to \( y \) must exist in \( G \).
\end{itemize}
This shows that \( \prec \) corresponds to the strict reachability relation, i.e. the edges of \( \gG^+ \). Since distinct DAGs can share the same \( \gG^+ \), the original \( G \) cannot be uniquely recovered from \( \mathcal{L} \).
\end{proof}

\begin{example}
Let \( G_1 = (V, E_1) \) with \( V = \{a, b, c\} \) and \( E_1 = \{(a, b), (b, c)\} \), and let \( G_2 = (V, E_2) \) with \( E_2 = \{(a, b), (b, c), (a, c)\} \). Both DAGs share the same set of topological orderings: \( \{a, b, c\} \). In \( G_1 \), the ordering respects \( a \to b \) and \( b \to c \); in \( G_2 \), the additional edge \( (a, c) \) is consistent with the order. The transitive closure for both is \( G_1^+ = G_2^+ = (V, \{(a, b), (b, c), (a, c)\}) \). Thus, from the common ordering alone, we recover \( G_1^+ \) (or \( G_2^+ \)) but cannot distinguish between \( G_1 \) and \( G_2 \).
\end{example}

Aggregating all topological orderings of a DAG \( G \) yields its transitive closure \( \gG^+ \) because the relation \( \prec \) captures all pairs of nodes connected by a directed path. However, since distinct DAGs can share the same transitive closure, the original edge set \( E \) remains ambiguous. This highlights a fundamental limit: while topological orderings reveal the reachability structure of the graph, they do not specify the precise topology of the DAG. \rev{It is crucial to understand that the transitive closure $\gG^+$ includes both direct edges from the original DAG $\gG$ and indirect relationships (edges representing multi-step causal pathways). While this means that $\gG^+$ is not identical to the true causal DAG $\gG$, it serves as a robust intermediate representation that captures all ancestral relationships. The subsequent pruning step (Section~\ref{sec:aggregation}) is specifically designed to distinguish between direct and indirect relationships, refining $\gG^+$ to recover the sparse structure of $\gG$. Nevertheless, recovering the transitive closure \( \gG^+ \) is a stronger claim than what has been previously achieved by single ordering methods (CAM, SCORE, DAS, NoGAM, DiffAN). A single ordering recovers only a weak approximation of the causal structure because, for any given position in the ordering, all subsequent positions are considered as potential descendants, introducing spurious edges in addition to possible indirect edges.}

\subsection{Ordering aggregation and recovery of temporal structure}

In causal discovery, we do not know a priori the total number of valid causal orderings that a resulting DAG will admit. The number of causal orderings is not only related to the number of variables but also to the edge topology and density. Therefore, we next explore how aggregating a subset of the total orderings improves structure estimation. Empirical evidence, shown in Figure~\ref{fig:multiple_orderings}, suggests that enumerating all topological sorts is unnecessary to achieve strong performance, as a relatively small subset of randomly sampled orderings can suffice for accurate structure recovery.

To formalize the benefit of multiple orderings, assume that for each valid causal ordering $\pi$ of a DAG $\gG$, we obtain an edge set $E(\pi)$ that corresponds to the directed edges implied by that ordering. Define the aggregated edge set over the intersection of $m$ orderings as
\[
E_{\text{agg}}^{(m)} = \bigcap_{i=1}^{m} E(\pi_i).
\]
Then, under the assumption that each valid ordering provides complementary information about the true ancestral relations, following Proposition \ref{prop:transitive-closure}, we have
\[
\lim_{m \to M} E_{\text{agg}}^{(m)} = \gG^+,
\]
where $\gG^+$ denotes the transitive closure of the true edge set $E$ of $\gG$ and where $M$ is the number of possible orderings for $\gG$.

To validate this idea, we generate random temporal DAGs and use Kahn’s algorithm \citep{Kahn1962} to list all possible orderings with topological sorting. Then, we estimate the transitive closure for each DAG and compare it with the estimated $E_{\text{agg}}^{(m)}$ as we increase $m$ for each DAG. We compare the estimated and true transitive closure via F1 score. In Figure~\ref{fig:multiple_orderings}, we show the percentage of orderings $\frac{m}{M}*100$ used with respect to the F1 score. For our data, we observe that ${\sim}40\%$ of all orderings is generally enough to recover the transitive closure with high F1 score.

\begin{figure}[htbp]
  \centering
  \includegraphics[width=0.6\textwidth]{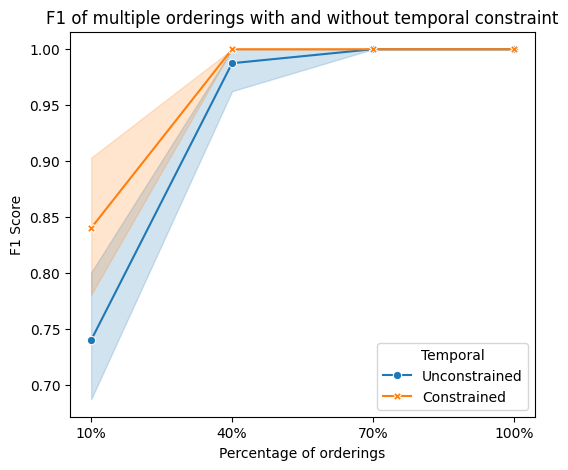}
  \caption{Impact of multiple causal orderings on DAG recovery. A single (or few) ordering (left) may include extra or spurious edges, whereas aggregating multiple orderings (right) more accurately recovers the full transitive closure of the underlying DAG.}
  \label{fig:multiple_orderings}
\end{figure}

\subsection{Incorporating Temporal Constraints}

Temporal constraints are critical when recovering causal structure from time series data. Following the \emph{temporal priority principle}; causes precede effects, in time. This principle forbids backward-in-time causation and simplifies structure learning.

To incorporate this constraint into edge aggregation, we restrict the aggregated edge set to include only temporal edges satisfying \( t-\tau < t \). Formally, let 
\[
\mathcal{T} = \{\, (\rx_i^{t-\tau},\, \rx_j^t) \mid \tau \geq 0, \; t-\tau < t \, \}
\]
denote the set of all temporally valid edges. Then, define the temporally constrained aggregated edge set as
\[
E_{\text{agg}}^{(m, T)} = E_{\text{agg}}^{(m)} \cap \mathcal{T}.
\]
This filtering ensures that only edges adhering to the temporal priority principle are retained, thereby excluding any spurious backward-in-time connections and further enhancing the reliability of the recovered DAG. Empirically, as illustrated in Figure \ref{fig:multiple_orderings}, adding the temporal constraint decreases the number of causal orderings required to estimate the causal structure for a given F1 score. 

In summary, integrating temporal constraints into the aggregation of multiple causal orderings is both a natural extension of, and an efficient strategy for, structure learning in time series data. By leveraging the inherent temporal priority principle—where causes always precede effects—we effectively filter and refine the aggregated edge set, ensuring that only temporally valid connections are retained. This dual approach not only enhances the robustness of the recovered causal structure by capturing complementary ancestral information across orderings but also streamlines the learning process, as it reduces the effective search space and mitigates spurious dependencies.

While our empirical validation demonstrates the theoretical potential of aggregating multiple causal orderings for recovering the transitive closure of a DAG, it is important to note that this evaluation was performed on known graphs where all valid causal orderings were available and the total number of orderings was predetermined. Orderings were selected uniformly at random from this complete set. We further note that for practical applications, an algorithm may generate very similar orderings, thereby limiting the diversity necessary for robust structure recovery. In the following section, we explore methods to induce sufficient variability in the generated orderings, enabling the aggregation process to remain both effective and efficient in recovering the true causal structure.

\section{A Diffusion-Based Approach for Temporal Discovery}
\label{sec:methodology}

We introduce \METHODNAME, a method that utilises diffusion processes to recover multiple valid causal orderings in temporal data. Our approach, illustrated in Figure~\ref{fig:pipeline_compact}, integrates both a frequency domain perspective and multi-scale causal ordering to capture the complex structure of temporal relationships. We refer to our notation (Section~\ref{subsec:nomenclature}), distinguishing between time lags (\(\tau\)), diffusion timesteps (\(k\)), and the indices for causal orderings (\(\pi\)).

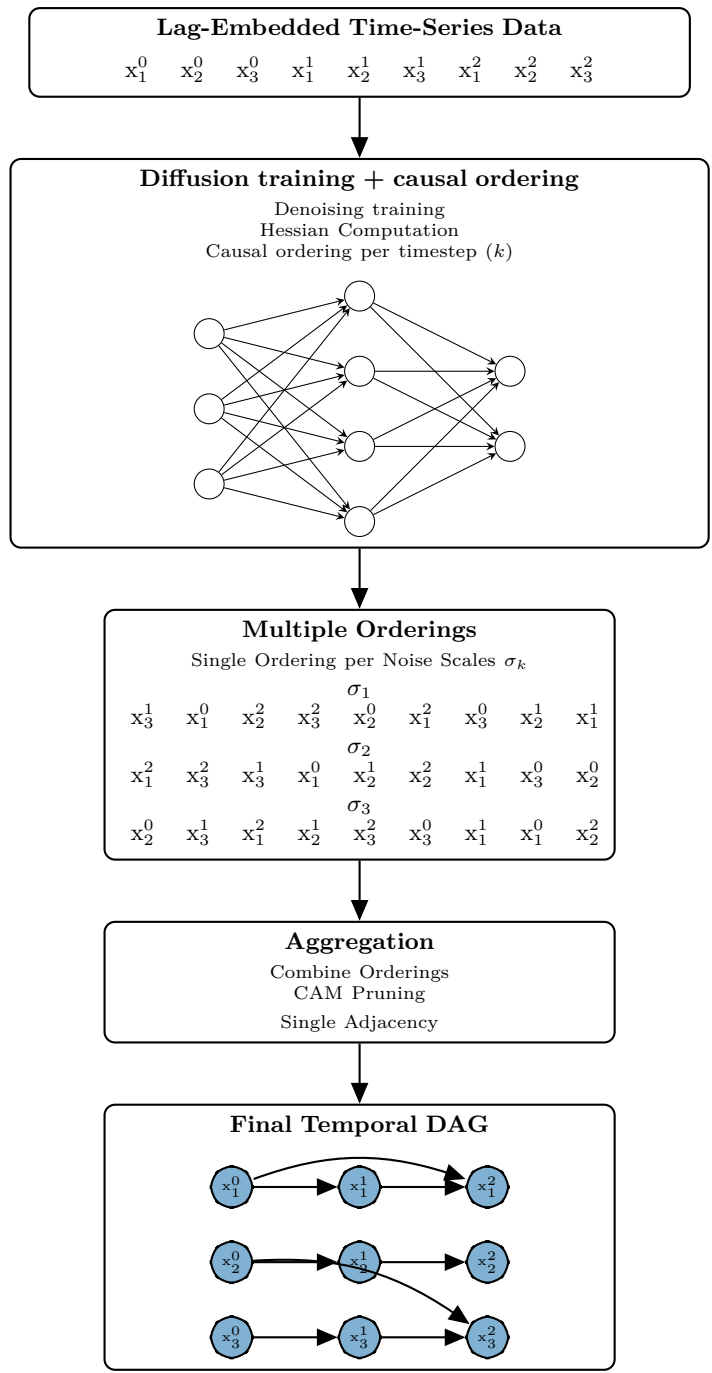
\begin{figure}[ht]
\centering
\begin{tikzpicture}[
    font=\small,
    node distance=1.0cm,
    every node/.style={
       inner sep=4pt,
       rounded corners,
       draw=black,
       thick,
       align=center
    }
]

\node[text width=8.5cm] (node1) {
  \textbf{Lag-Embedded Time-Series Data}\\[5pt]
  \small
  \begin{tabular}{c c c c c c c c c}
      $\rx_1^0$ & $\rx_2^0$ & $\rx_3^0$ &
      $\rx_1^1$ & $\rx_2^1$ & $\rx_3^1$ &
      $\rx_1^2$ & $\rx_2^2$ & $\rx_3^2$ \\
  \end{tabular}
};

\node[text width=9cm, align=center] (node2) [below=0.8cm of node1] {
  \textbf{Diffusion training + causal ordering}\\[3pt]
  \scriptsize
  Denoising training \\
  Hessian Computation \\
  Causal ordering per timestep ($k$) \\[6pt] 

  \begin{tikzpicture}[
      node distance=0.2cm and 0.4cm,
      every node/.style={circle, draw, minimum size=2mm},
      every path/.style={draw, ->, >=stealth}
  ]

  \node (I1) at (0, 1) {};
  \node (I2) at (0, 0) {};
  \node (I3) at (0, -1) {};

  \node (H1) at (2, 1.5) {};
  \node (H2) at (2, 0.5) {};
  \node (H3) at (2, -0.5) {};
  \node (H4) at (2, -1.5) {};

  \node (O1) at (4, 0.5) {};
  \node (O2) at (4, -0.5) {};

  \foreach \i in {I1, I2, I3} {
      \foreach \h in {H1, H2, H3, H4} {
          \path (\i) -- (\h);
      }
  }

  \foreach \h in {H1, H2, H3, H4} {
      \foreach \o in {O1, O2} {
          \path (\h) -- (\o);
      }
  }


  \end{tikzpicture}
};

\draw[->, thick] (node1.south) -- (node2.north);

\node[text width=6.5cm] (node3) [below=0.8cm of node2] {
  \textbf{Multiple Orderings}\\[3pt]
  \scriptsize
  Single Ordering per Noise Scales $\sigma_k$\\
  \small
$\sigma_1 $
\begin{tabular}{c c c c c c c c c}
     $\rx_3^1$ & $\rx_1^0$ & $\rx_2^2$ &
     $\rx_3^2$ & $\rx_2^0$  & $\rx_1^2$ &
     $\rx_3^0$ & $\rx_2^1$ & $\rx_1^1$ \\
\end{tabular}
\\
$\sigma_2$
\begin{tabular}{c c c c c c c c c}
     $\rx_1^2$ & $\rx_3^2$ & $\rx_3^1$ &
     $\rx_1^0$ & $\rx_2^1$  & $\rx_2^2$ &
     $\rx_1^1$ & $\rx_3^0$ & $\rx_2^0$ \\
\end{tabular}
$\sigma_3$
\begin{tabular}{c c c c c c c c c}
     $\rx_2^0$ & $\rx_3^1$ & $\rx_1^2$ &
     $\rx_2^1$ & $\rx_3^2$  & $\rx_3^0$ &
     $\rx_1^1$ & $\rx_1^0$ & $\rx_2^2$ \\
\end{tabular}
};

\draw[->, thick] (node2.south) -- (node3.north);

\node[text width=6.5cm] (node4) [below=0.8cm of node3] {
  \textbf{Aggregation}\\[3pt]
  \scriptsize
  Combine Orderings \\
  CAM Pruning \\
  Single Adjacency
};

\draw[->, thick] (node3.south) -- (node4.north);

\node[text width=6.5cm] (node5) [below=0.8cm of node4] {
  \textbf{Final Temporal DAG}\\[4pt]
  \begin{tikzpicture}[
    node distance=1.2cm,
    font=\tiny,
  ]
    \node[endogenous node] (v10) at (0,0)   {$\rx_1^0$};
    \node[endogenous node] (v20) at (0,-1.0){$\rx_2^0$};
    \node[endogenous node] (v30) at (0,-2.0){$\rx_3^0$};

    \node[endogenous node] (v11) at (1.7,0)   {$\rx_1^1$};
    \node[endogenous node] (v21) at (1.7,-1.0){$\rx_2^1$};
    \node[endogenous node] (v31) at (1.7,-2.0){$\rx_3^1$};

    \node[endogenous node] (v12) at (3.4,0)   {$\rx_1^2$};
    \node[endogenous node] (v22) at (3.4,-1.0){$\rx_2^2$};
    \node[endogenous node] (v32) at (3.4,-2.0){$\rx_3^2$};

    \draw[->, thick] (v10) -- (v11);
    \draw[->, thick] (v11) -- (v12);
    \draw[->, thick] (v20) -- (v21);
    \draw[->, thick] (v21) -- (v22);
    \draw[->, thick] (v30) -- (v31);
    \draw[->, thick] (v31) -- (v32);

    \draw[->, thick, bend left=20] (v10) to (v12);
    \draw[->, thick, bend left=20] (v20) to (v32);
  \end{tikzpicture}
};

\draw[->, thick] (node4.south) -- (node5.north);

\end{tikzpicture} 
\caption{\METHODNAME\ pipeline for temporal causal discovery. We start with lag-embedded time-series data, apply diffusion-based single-order discovery, then extend to multiple orderings and aggregate them. The final temporal DAG below shows an example with three variables over three timesteps.}
\label{fig:pipeline_compact}
\end{figure}

\newpage
\subsection{Why do diffusion steps capture different frequency components?}
\label{sec:frequency_domain}

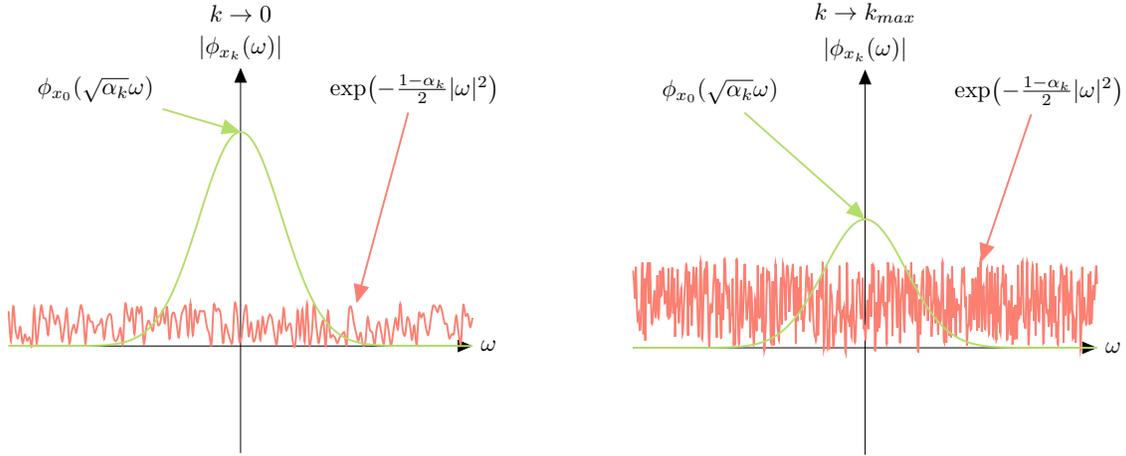
\begin{figure}[htbp]
  \centering
  \begin{minipage}{0.49\textwidth}
        \centering
          \begin{tikzpicture}[scale=0.9]
            \begin{axis}[
                axis lines=middle,
                axis line style={->},
                xlabel={$\omega$},
                ylabel={\shortstack{$k \rightarrow 0$ \\ \\ $|\phi_{x_k}(\omega)|$}},
                every axis x label/.style={at={(ticklabel* cs:1)}, anchor=west},
                every axis y label/.style={at={(ticklabel* cs:1)}, anchor=south},
                xtick=\empty, 
                ytick=\empty,
                enlargelimits=false,
                clip=false,
                xmin=-4, xmax=4, 
                ymin=-0.5, ymax=1.3,  
                domain=-4:4,
                samples=200,
                no markers
                ]
                
                \addplot [salmon, thick, smooth] {rand*0.1+0.1};
                
                \addplot [name path=peak, lightgreen, thick, smooth] {exp(-(x)^2)};
                \node [thick, smooth] (source) at (axis cs:3,1.2) {$\exp\!\left(-\tfrac{1-\alpha_k}{2}|\omega|^2\right)$};
                \draw[->,salmon, thick] (source) -- (axis cs:2,0.2);
                \node [thick, smooth] (source) at (axis cs:-2.5,1.2) {$\phi_{x_0}(\sqrt{\alpha_k}\omega)$};
                \draw[->, lightgreen, thick] (source) -- (axis cs:0.0,1);
            
            \end{axis}
        \end{tikzpicture}
    \end{minipage}\hfill
    \begin{minipage}{0.49\textwidth}
        \centering
        \begin{tikzpicture}[scale=0.9]
        \begin{axis}[
            axis lines=middle,
            axis line style={->},
            xlabel={$\omega$},
            ylabel={\shortstack{$k \rightarrow k_{max}$ \\ \\ $|\phi_{x_k}(\omega)|$}},
            every axis x label/.style={at={(ticklabel* cs:1)}, anchor=west},
            every axis y label/.style={at={(ticklabel* cs:1)}, anchor=south},
            xtick=\empty, 
            ytick=\empty,
            enlargelimits=false,
            clip=false,
            xmin=-4, xmax=4, 
            ymin=-0.5, ymax=1.3,  
            domain=-4:4,
            samples=500,
            no markers
            ]
            
            \addplot [salmon, thick, smooth] {rand*0.2+0.2};
            
            \addplot [name path=peak, lightgreen, thick, smooth] {exp(-(x)^2)*0.6};
            \node [thick, smooth] (source) at (axis cs:3,1.2) {$\exp\!\left(-\tfrac{1-\alpha_k}{2}|\omega|^2\right)$};
            \draw[->,salmon, thick] (source) -- (axis cs:2,0.4);
            \node [thick, smooth] (source) at (axis cs:-2.5,1.2) {$\phi_{x_0}(\sqrt{\alpha_k}\omega)$};
            \draw[->, lightgreen, thick] (source) -- (axis cs:0.0,0.6);
            
        \end{axis}
        \end{tikzpicture}
    \end{minipage}
  \caption{\textbf{Frequency emphasis of diffusion steps.}
    A forward diffusion step decomposes $\rx_k=\sqrt{\alpha_k}\rx_0+ \sqrt{1-\alpha_k}\,\epsilon$. As can be seen in the Fourier domain, high values of \(k\) emphasize learning of low-frequency while low values of \(k\) force the network to focus on high-frequency components.}
  \label{fig:frequency}
\end{figure}

\rev{Different diffusion steps \(k\) capture distinct aspects of the data, which can be understood from a frequency perspective. Consider a forward diffusion process (e.g. DDPM \citep{ho2020denoising}), which decomposes each observation at step \(k\) as
\[
    x_k \;=\; \sqrt{\alpha_{k}} \, x_0 \;+\; \sqrt{1-\alpha_{k}} \,\epsilon,\quad \epsilon \sim \mathcal{N}(0,\mI).
\]
Since $x_0$ and $\epsilon$ are independent, the characteristic function of $x_k$ becomes:
\[
    \phi_{x_k}(\omega) = \phi_{\sqrt{\alpha_k} x_0}(\omega) \cdot \phi_{\sqrt{1-\alpha_k} \epsilon}(\omega) = \phi_{x_0}(\sqrt{\alpha_k} \omega) \cdot \phi_{\epsilon}(\sqrt{1-\alpha_k} \omega).
\]
For Gaussian $\epsilon$, we have $\phi_{\epsilon}(\omega) = \exp(-\frac{1}{2}|\omega|^2)$, so the noise component contributes $\exp(-\frac{1-\alpha_k}{2}|\omega|^2)$, which acts as a low-pass filter with bandwidth inversely related to $\sqrt{1-\alpha_k}$. As $k$ increases, $\alpha_k$ decreases, making the filter narrower and emphasizing lower frequencies. This creates a natural multi-scale decomposition where large $k$ values capture coarse, low-frequency structure while small $k$ values preserve fine, high-frequency details.}

Figure~\ref{fig:frequency} schematically illustrates how each \(k\) emphasizes different frequency components; large \(k\) reveals coarse causal links and small \(k\) highlights finer edges. This multi-scale view enables the network to focus on spectral components most impacted by noise at each scale. Similar conclusions were drawn based on observations in the imaging domain \citep{KASCENAS2023102963}.

\subsection{Lag-Embedded Representation of Time Series}
\label{subsec:lag_embed}

The diffusion model is trained on the lag-embedded representation of the time series. Lag-embedding is a common technique in dynamical systems theory to capture temporal dependencies \citep{Takens1981}. Let $\mX=[\rvx^{1},\dots,\rvx^{T}]^\top\!\in\mathbb{R}^{T\times d}$ denote the raw sequence with $d$ variables and $T$ time steps.
We construct a \emph{lag-embedded} matrix
\[
  \sD
  \;=\;
  \bigl[\,\rvx^{t},\;\rvx^{t-1},\dots,\rvx^{t-\tau_{\max}}\bigr]_{t=\tau_{\max}}^{T-1}
  \in\mathbb{R}^{(T-\tau_{\max})\times d(\tau_{\max}+1)}.
\]

Each column of \(\sD\) now refers to a specific variable–lag pair \(x_i^{t-\tau}\). This representation feeds the diffusion network; temporal precedence is enforced later when we discard any edge that points backwards in time.

\noindent
\textbf{Incorporating temporal information}\quad 
For temporal data with lags \(\tau\), each variable is indexed as \(x_i^t\), and potential edges include both lagged (\(x_i^{t-\tau} \to x_j^t\)) and contemporaneous (\(x_i^t \to x_j^t\)) relationships. A single diffusion model is trained on the lag-embedded dataset \(\sD\in \R^{(T-\tau_{max}) \times (d \times \tau_{max})}\), and the leaf-finding process is applied in the same manner, ensuring that each node is treated as a time-indexed variable. This strategy enforces a temporal DAG that captures both lagged and instantaneous dependencies.

\subsection{Multi-Scale Causal Orderings}
\label{subsec:multiscale}

Each diffusion step \(k\) corresponds to a distinct noise regime, and consequently, the Hessian \(\mH_{\rvx} \log p(\rvx)\) computed at each \(k\) reveals different adjacency constraints. Large \(k\) values tend to highlight broad, low-frequency cause–effect relationships, while small \(k\) values accentuate fine-grained, high-frequency interactions.

\noindent
\textbf{Multiple orderings at different \(k\).}\quad Instead of relying on a single noise scale, we execute a causal ordering algorithm at several discrete diffusion steps \(\{k_1, \dots, k_S\}\). Each execution yields a \emph{causal ordering}, denoted by \(\pi\), which reflects the partial order implied by that particular \(k\). This process generates multiple orderings \(\pi_1, \dots, \pi_S\), thereby capturing the multi-scale structure inherent in the data. We then identify leaf nodes via the diagonal of the Hessian, following the approach of \citet{Rolland2022ScoreModels} and \citep{sanchez2023diffusion}.

\noindent
\textbf{Causal ordering with a Hessian from diffusion training.}\quad After training a diffusion model \(\diffusionmodel(\rvx,k)\), we approximate the partial derivatives as
\[
\mH_{i,j} \log p(\rvx) \;\approx\; \frac{\partial}{\partial x_j}\Bigl[\diffusionmodel(\rvx,k)\Bigr]_i,
\]
for each \(k\). The diagonal entries \(\mH_{i,i}\) exhibit lower variance for leaf nodes than for non-leaf nodes \citep{Rolland2022ScoreModels,sanchez2023diffusion}. To identify a leaf node, we:
\begin{enumerate*}[label=(\roman*)]
    \item Estimate \(\mH(\rvx,k)\) on a mini-batch of data \(\sD\).
    \item Identify the variable \(x_\ell\) with the lowest diagonal variance \(\Var\bigl[\mH_{\ell,\ell}\bigr]\).
    \item Remove \(x_\ell\) from the distribution by masking out the variables in the input as done with DiffAN masking \citep{sanchez2023diffusion}.
\end{enumerate*}
This procedure is repeated until all variables are assigned an order, yielding a complete causal ordering \(\pi\). Repeating this process for each chosen \(k\) produces the set \(\{\pi_1, \dots, \pi_S\}\).

After this procedure, the set \(\{\pi_1,\dots,\pi_S\}\) can be aggregated as described in Section~\ref{sec:aggregation}. In essence, each \(\pi_s\) represents a valid causal ordering that reflects the partial order constraints emphasized at its respective diffusion timestep \(k_s\). By uniting these multi-scale perspectives, the \METHODNAME\ algorithm produces a final temporal DAG that captures both coarse (low-frequency) and fine (high-frequency) causal interactions.

\begin{algorithm}[htb]
\caption{Estimating Multi‑Scale Causal Orderings.}
\label{alg:est_orderings}
\begin{algorithmic}[1]
\Require
    \Statex $\sD\!\in\!\R^{T\times d}$  \Comment{observational time‑series}
    \Statex $\tau_{\max}$ \Comment{largest lag to consider}
    \Statex $\diffusionmodel(\cdot,k)$ \Comment{A trained diffusion model approximating $\nabla \log p(\rvx)$ at \(k \in [0,k_{\max}]\)}
    \Statex $\mathcal{K}=\{k_1,\ldots,k_S\}$ \Comment{selected noise scales}
\Ensure $\textit{orders}$ \Comment{List of valid causal orderings}

\Function{\METHODNAME}{$\sD,\mathcal{K},\tau_{\max}$}
    \State $\textit{orders}\gets\varnothing$
    \ForAll{$k\in\mathcal{K}$}                                     \label{line:k-loop}
        \State $V \gets \{\rx_i^{\tau}\;|\;i=1\dots d,\;\tau=0\dots\tau_{\max}\}$ \Comment{lag-embedded nodes}
        \State $\pi\gets[\,]$ \Comment{ordering for this scale}
        \While{$V\neq\varnothing$}
            \State $\mH_{\mathrm{diag}}\!\gets\textsc{HessianDiagVar}\!\bigl(
                   \diffusionmodel,\sD[:,V],k\bigr)$
            \State $L\!\gets\arg\min_{v\in V}\mH_{\mathrm{diag}}[v]$ \Comment{leaf(s)}
            \State $\pi\gets[L\,|\,\pi]$
            \State $V\gets V\setminus L$
        \EndWhile
        \State $\textit{orders}\gets\textit{orders}\cup\{\pi\}$
    \EndFor
    \State\Return $\textit{orders}$
\EndFunction
\end{algorithmic}
\end{algorithm}

\subsection{Aggregating Multiple Orderings}
\label{sec:aggregation}

Section~\ref{sec:multiple_orderings} showed that taking the \emph{intersection}
of \emph{all} topological sorts of a DAG $\gG$ yields its transitive closure
$G^{+}$, which is in general a \textbf{superset} of $\gG$.  Because enumerating
every ordering is infeasible, we combine a finite sample of orderings in two
simple steps.

\textbf{Soft voting.}\;
From $S$ orderings $\{\pi_{1},\dots,\pi_{S}\}$ obtained at diffusion steps
$k_{1:S}$ we form a vote matrix
$
  W_{ij}=S^{-1}\sum_{s}{\bf 1}\{(i\!\to\!j)\in\pi_{s}\}.
$
Thresholding at $\theta\in(0,1]$ produces the \emph{soft transitive closure}
$
  \tilde A_{ij}={\bf 1}\{W_{ij}\ge\theta\}.
$
The extremes $\theta=0$ and $\theta=1$ reduce to the plain union and the hard intersection, respectively; intermediate values let us balance recall against precision.

\textbf{CAM pruning.}\ The matrix $\tilde A$ can still contain indirect or spurious edges.  We refine it with the likelihood–based pruning routine of \citet{buhlmann2014cam}, removing edges that do not improve the predictive loss of the child variable given its other parents.  The result is our final estimate $\hat{\mA}$. This procedure is commonly used across most causal ordering approaches from Section~\ref{sec:ordering_related_workds} \rev{and constitutes an integral part of our method as it allows us to recover DAG $\gG$ from $G^{+}$ by eliminating indirect links}. 

\textbf{Practical reliability.} Soft voting has the appealing property that any edge appearing in \emph{every} sampled ordering is retained, whereas edges that never appear are discarded automatically.  When (i) the sampler generates a diverse set of valid orderings and (ii) the sample size is large enough for CAM’s tests to be informative, $\tilde A$ approximates $G^{+}$ increasingly well and the pruning step tends to eliminate the remaining indirect links, often recovering $\gG$ exactly in practice.  

\section{Experiments}\label{sec:exps}
\subsection{Setup}
Our experimental framework prioritizes replication of results, modularity, and ease of extension. These are features found in the Snakemake~\citep{molder2021sustainable} workflow management system, that forms a base for our experimental setup. Snakemake has previously been used for benchmarking purposes in (non-temporal) causal discovery~\citep{rios2021benchpress}, from which we draw inspiration. Our codebase is accessible online\footnote{\url{https://github.com/CHAI-UK/DOTS}}.

\subsection{Data}
\textbf{Simulations}. Our synthetic Data Generating Process (DGP) is based on the work of \citet{causalNex2021} and \citet{dgpTimeSeries2020}. Our experimental setup involves DGPs with the following properties: sample size (observed time steps) $T \in \{ 200, 1000, 2000, 5000 \}$, number of graph nodes $d \in \{ 3, 4, 5, 6 \}$, lag size $\tau \in \{ 1, 2, 3 \}$. In addition, all setups use non-linear causal mechanisms (piecewise linear and trigonometric) and incorporate the same noise distribution $\epsilon \sim \mathcal{N}(0, [0.01, 0.05])$. Each setting has been repeated 10 times to obtain robust results. Causal mechanisms and relationships are invariant across time (i.e.~they are stationary).

Borrowing the notation from \citet{dgpTimeSeries2020}, a set of temporal causal links $\mathcal{T}$ generated in the DGP is defined as follows:
\begin{equation}
    \mathcal{T}_{t,\tau} := \left \{  X_i(t-\tau) \rightarrow X_j(t) | i,j \in \{ 1, \dots, d \} \right \},
\end{equation}
with $t$ denoting time index. Note that we do not consider instantaneous links in our experiments ($\tau > 0$), but do allow for autoregressive relationships ($i=j$). We also fix the lag size $\tau$ across all relationships within any single dataset to isolate and study the influence of $\tau$ on algorithmic performance.

\textbf{Real datasets}. We also perform experiments on datasets closer to real-life complexities. To achieve this, we incorporate CausalTime~\citep{cheng2024causaltime}, a realistic benchmark for time series causal discovery. CausalTime provides three datasets: Air Quality Index (AQI), Traffic, and Medical. The AQI data consist of 36 variables, whereas Traffic and Medical have 20. In terms of sample size, all datasets \rev{consist of 480 samples, with time length of $T{=}40$.} Since \rev{\METHODNAME\ requires a large sample size to perform well, we combine all samples (all 480 samples of length $T{=}40$ stacked vertically), plus one row of zeros\footnote{Separating zeros are not needed at the end of the data, hence minus one in the formula.} to avoid cross-contamination, into a single dataset of length $T{=} 480 \times (40+1) - 1 {=} 19\,679$}. We apply the same pre-processing procedure to all three datasets.
\begin{itemize}[leftmargin=*]
  \item \textbf{AQI}: hourly $\text{PM}_{2.5}$ readings from $N{=}36$ monitoring stations across China ($T{=}8760$).  A geographical distance kernel supplies a sparse prior graph.  
  \item \textbf{Traffic}: average speed measured every 5 min at $N{=}20$ loop detectors in the San-Francisco Bay Area ($T{=}52\,116$); the prior graph again follows pairwise distance.  
  \item \textbf{Medical}: $N{=}20$ vital-sign and chart-event channels extracted from 1000 MIMIC-IV ICU stays, re-sampled to 2-h resolution ($T{=}600$ on average).
\end{itemize}

\subsection{Algorithms}

We compare our proposed method to the following baselines:
\begin{itemize}[leftmargin=*]
  \item \rev{Temporal:}
  \begin{itemize}
      \item \textbf{Dummy}: Returns a fully‐connected temporal DAG as a naive estimation.
      \item \textbf{PCMCI} \citep{runge2020discovering}: Employs lagged conditional‐independence tests for constraint‐based discovery in autocorrelated time series.
      \item \textbf{PCMCI+} \citep{runge2020discovering}: Extends PCMCI with additional conditioning to control false positives.
      \item \textbf{VARLiNGAM} \citep{hyvarinen2010estimation}: Combines linear VAR modeling with non‐Gaussian ICA to recover a unique causal order.
      \item \textbf{DYNOTEARS} \citep{pamfil2020dynotears}: Casts temporal DAG learning as a single continuous optimization with an acyclicity constraint.
      \item \textbf{TCDF} \citep{nauta2019causal}: Trains temporal convolutional networks and validates edges via in‐silico interventions.
      \item \textbf{TiMINo} \citep{peters2013causal}: Applies additive‐noise regressions with independence tests on residuals for both lagged and instant effects.
  \end{itemize}
  \item \rev{Non-temporal:}
  \begin{itemize}
      \item \textbf{CAM} \citep{buhlmann2014cam}: Fits additive noise models with restricted maximum likelihood and sparsity‐based pruning.
      \item \textbf{SCORE} \citep{Rolland2022ScoreModels}: Uses score‐matching to estimate the Hessian variance for iterative leaf removal.
      \item \textbf{DAS} \citep{montagna2023scalable}: Scalable ANM ordering via efficient Hessian diagonal estimation.
      \item \textbf{NoGAM} \citep{montagna2023causal}: Generalizes ANM ordering without Gaussian noise assumptions, leveraging kernelized score estimates.
      \item \textbf{DiffAN} \citep{sanchez2023diffusion}: Uses denoising diffusion models to approximate the score Jacobian for fast, retraining‐free causal ordering.
  \end{itemize}
\end{itemize}

While the \rev{ordering-based} algorithms (CAM, SCORE, DAS, NoGAM, DiffAN) were not developed for temporal tasks, we still include them in our experiments due to our proposed method's strong roots in \rev{topological ordering}. To make the use of the \rev{ordering-based} methods more appropriate in this temporal setting, we post-process their predicted graphs by removing the edges that defy the arrow of time. This mild addition is indicated by the `-C' suffix added to the name of the methods in question (e.g. CAM-C) when reporting the results in Section~\ref{sec:results}. \rev{Comparing to temporal methods, however, remains the main validation target for us.}

\subsection{Graph Representation}
Temporal causal graphs can be represented at different granularities \citep{assaad2022survey}:
\begin{itemize}
    \item \textbf{Window Causal Graph:} Restricts the full time causal graph to a finite lag window \(\tau_{\max}\), representing only edges from \(\rx_i^{t-\tau}\) to \(\rx_j^t\) for \(\tau \leq \tau_{\max}\). This trades off completeness for computational feasibility.
    \item \textbf{Summary Causal Graph:} Aggregates causal relationships across time without specifying exact lag indices, creating a more compact but less detailed representation as lagged and contemporaneous edges are represented the same way. Autocorrelated variables are represented with self-loops.
\end{itemize}
Window graphs are naturally more suitable to our method's use case. However, we also include summary graphs in our study as we build upon TiMINo that outputs only this type of graphs. Including summary graphs allows us to directly compare to TiMINo.

\subsection{Evaluation}
The assumption that SCMs are invariant across time (stationarity) results in repeated causal links. Therefore, we focus on the correctness of the predicted edges that terminate at (non-lagged) time $t$, that is $\mathcal{T}_{t,\tau}$. We then compare predicted edges to the ground truth and calculate True Positives (TP), False Positives (FP) and False Negatives (FN), from which we obtain \textit{Recall}, \textit{Precision} and $F1$ metrics as follows:
\begin{align}
    \text{Recall} = \frac{TP}{TP + FN}, && \text{Precision} = \frac{TP}{TP + FP}, && {F1} = \frac{2 \times \text{Recall} \times \text{Precision}}{\text{Recall} + \text{Precision}},
\end{align}
as per \citep{assaad2022survey}. We report $F1$ on window and summary graphs ($F1_W$ and $F1_S$, respectively) as the main metric of interest, but we also supplement our results with \textit{Recall} and \textit{Precision}.

\subsection{Results}\label{sec:results}

\subsubsection{Simulations}
Figures \ref{fig:sim_f1_w} and \ref{fig:sim_f1_s} show the main results (averages and $95\%$ confidence intervals). In both cases (window and summary graphs), \METHODNAME\ shows very strong and robust performance \textbf{across different sample sizes, numbers of features and lag sizes}. Apart from a clear separation from the competition, \METHODNAME\ is also one of the few methods that keeps improving in larger sample sizes ($T=5000$). \rev{Ordering-based} methods and \emph{PCMCI} family perform comparatively, with another diffusion-based method (\emph{DiffAN-C}) coming out on top among these, showing the advantage of diffusion models in strongly nonlinear tasks. \emph{VARLiNGAM}, \emph{DYNOTEARS} and \emph{TCDF} struggle to outperform the naive baseline that predicts fully-connected DAGs, suggesting their leniency towards linear settings.
\begin{figure}[!hbp]
    \centering
    \includegraphics[width=1.0\linewidth]{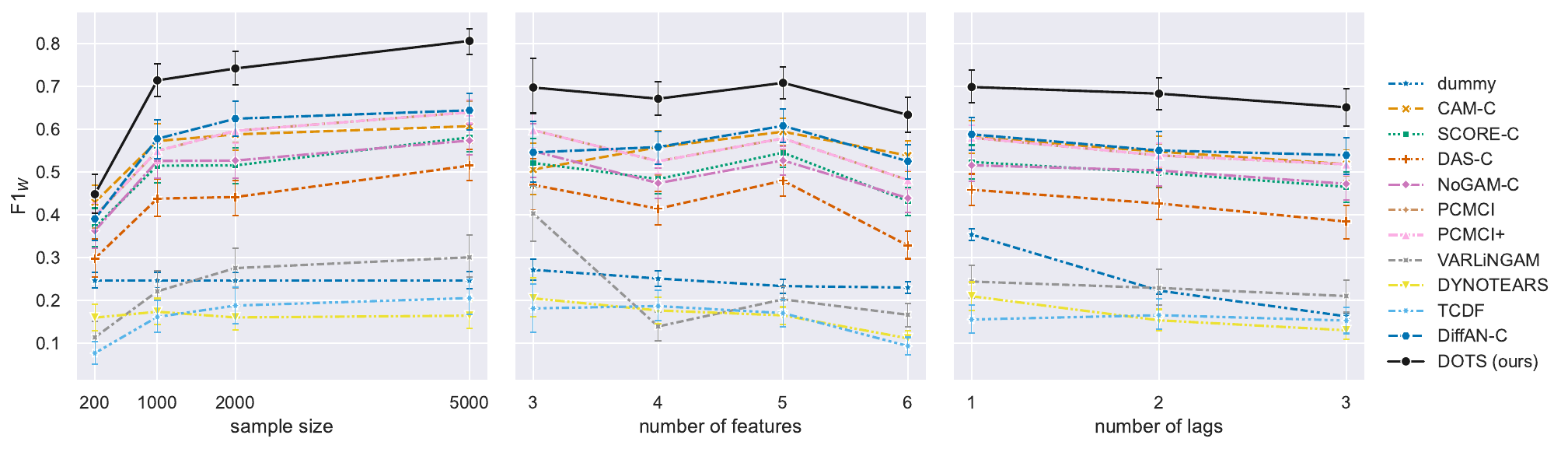}
    \caption{F1 scores on simulated window graphs ($F1_W$). TiMINo does not provide $F1_W$.}
    \label{fig:sim_f1_w}
\end{figure}

\begin{figure}[t]
    \centering
    \includegraphics[width=1.0\linewidth]{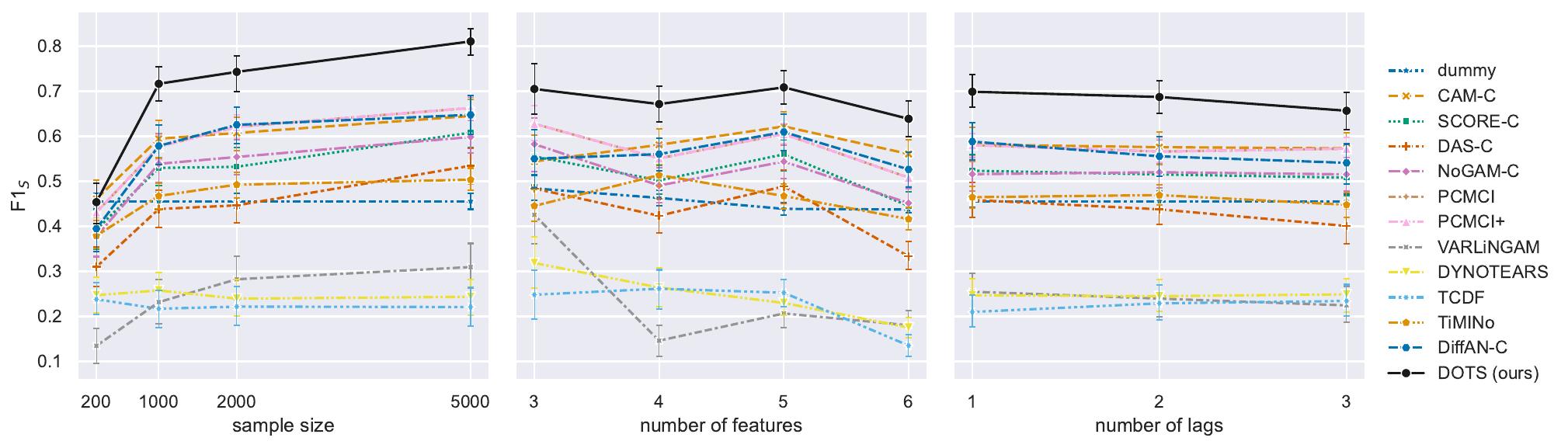}
    \caption{F1 scores on simulated summary graphs ($F1_S$).}
    \label{fig:sim_f1_s}
\end{figure}

Figure~\ref{fig:sim_runtime} shows average running times of all methods across the simulations. \METHODNAME\ places in the middle among the competitors, providing strong prediction performance at no extra computational costs as compared to baselines. The runtime of \METHODNAME\ also scales well that shows its promise in high-dimensional tasks.
\begin{figure}[hpt]
    \centering
    \includegraphics[width=1.0\linewidth]{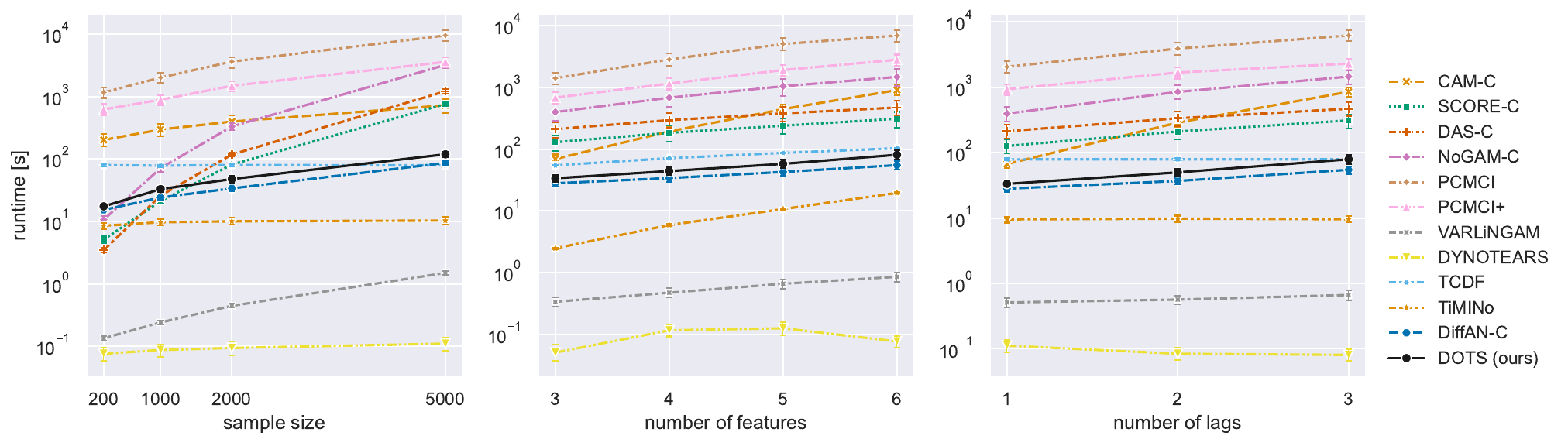}
    \caption{Average runtime of each algorithm in seconds obtained on synthetic data. Note the logarithmic scale on the y axis.}
    \label{fig:sim_runtime}
\end{figure}

\subsubsection{Real Datasets}

Table~\ref{tab:causaltime} summarises the main findings.  Two broad trends emerge.

\begin{enumerate}[leftmargin=*]
    \item \textbf{Overall difficulty.}  Average $F1_S$ rarely exceeds 0.45, far below synthetic baselines, underscoring the gap between toy DGPs and real dynamics.  Constraint-based \emph{PCMCI} loses precision on the noisier Medical subset, while score-based \emph{DYNOTEARS} collapses on AQI potentially due to its linearity assumption. \rev{On average, however, many temporal methods are competitive except for \textit{TCDF} and \textit{DYNOTEARS}.}
    \item \textbf{Best-in-class methods.}  Among approaches with functional assumptions, \emph{TiMINo}, which leverages the same function assumptions as \METHODNAME, excels on the Medical data. \emph{VARLiNGAM} dominates the spatial datasets, hinting that weak non-Gaussianities suffice for identifiability when the linear VAR fit is adequate, \rev{though both TiMINo and PCMCI are not far behind}. Our method (\textsc{DOTS})\rev{, despite not achieving the top rank on individual datasets,} is within one standard error of the leader on every subset and achieves the highest aggregate mean, validating the multi-ordering strategy.
\end{enumerate}

Qualitatively, failures tend to cluster around long-range edges (large spatial distance in AQI/Traffic or cross-system interactions in Medical). Future work should explore explicit distance-aware regularisation and non-stationary mechanisms to close this performance gap.

\begin{table}[htp]
    \centering
    \small
    \caption{$F1_S$ (higher is better) on CausalTime. Best and second best per column are highlighted. SCORE, DAS and NoGAM \rev{(non-temporal)} have been excluded due to exceeded memory allocation limits.}
    \begin{tabular}{r|l@{\hspace{4pt}}|@{\hspace{4pt}}ccc|c}
        \toprule
        \bf \rev{Type} & \bf Method & \bf Medical & \bf AQI & \bf Traffic & \bf Avg. $\uparrow$ \\
        \midrule
        \rev{non-temporal} & CAM-C            & 0.413 & 0.351 & 0.304 & 0.356 \\
        {} & DiffAN-C         & 0.313 & 0.356 & 0.282 & 0.317 \\
        \midrule
        \rev{temporal} & PCMCI/PCMCI+            & 0.427 & 0.382 & \underline{0.372} & 0.393 \\
        {} & VARLiNGAM        & 0.457 & \bftab{0.464} & \bftab{0.391} & 0.437 \\
        {} & DYNOTEARS        & 0.107 & 0.010 & 0.315 & 0.144 \\
        {} & TCDF             & 0.286 & 0.218 & 0.000 & 0.168 \\
        
        {} & TiMINo           & \bftab{0.553} & 0.429 & 0.340 & \underline{0.441} \\
        
        {} & \textsc{DOTS} (ours) & \underline{0.548} & \underline{0.457} & 0.349 & \bftab{0.451} \\
        \bottomrule
    \end{tabular}
    \label{tab:causaltime}
\end{table}

\subsubsection{Ablation on the number of orderings}
We now study how the number of causal orderings influences the predictive performance of \METHODNAME. Our theoretical results in Section~\ref{sec:multiple_orderings} consider the case where \textit{all} valid causal orderings are present. In Figure~\ref{fig:multiple_orderings}, we study the impact of the number of orderings in an ideal scenario where \textit{all} causal orderings are known. \METHODNAME\ generates a small, finite set of orderings based on a heuristic: using different diffusion timesteps ($k$). Therefore, we run this experiment to validate that, indeed, orderings derived from different $k$ are effective in improving performance.

To investigate this, we run \METHODNAME\ on synthetic data ($T=5000$) generated from a 3-node graph with $\tau=1$. We repeat the experiment 500 times. The results are shown in Figure~\ref{fig:ablation_nord}. Sampling a single causal ordering is underperforming as more instances are needed to arrive at the right graph. On the other hand, as few as four orderings show a substantial performance improvement, which is on the same performance level as ten orderings. Moving further to 15 instances shows a mild improvement, beyond which the performance plateaus. Overall, selecting the number of sampled causal orderings clearly has a large effect on method predictive performance, and while exploring more orderings may guarantee better performance, staying in the range of 4--10 may often strike the right balance between computational cost and delivered performance. The asymptotic gains are in agreement with the theoretical results presented in Figure~\ref{fig:multiple_orderings}.

 \emph{False positives} (spurious edges) are pruned more effectively because conflicting orderings rarely vote for the same incorrect link, while \emph{false negatives} (missed true edges) are rescued when at least one ordering captures the correct ancestor–descendant relation. These findings empirically substantiate our theoretical claim from Section~\ref{sec:multiple_orderings}: aggregating multiple orderings provides a robust consensus estimate of the transitive closure, correcting errors that any single ordering may introduce.

\rev{We also study the influence of the \textit{max lag} $\tau_{\max}$ and soft-voting threshold $\theta$ hyperparameters on performance of \METHODNAME, as well as the diversity of orderings obtained at different noise scales $k$. The results for these can be found in Appendix~\ref{app:ablations}.}

\begin{figure}[t]
    \centering
    \includegraphics[width=0.6\linewidth]{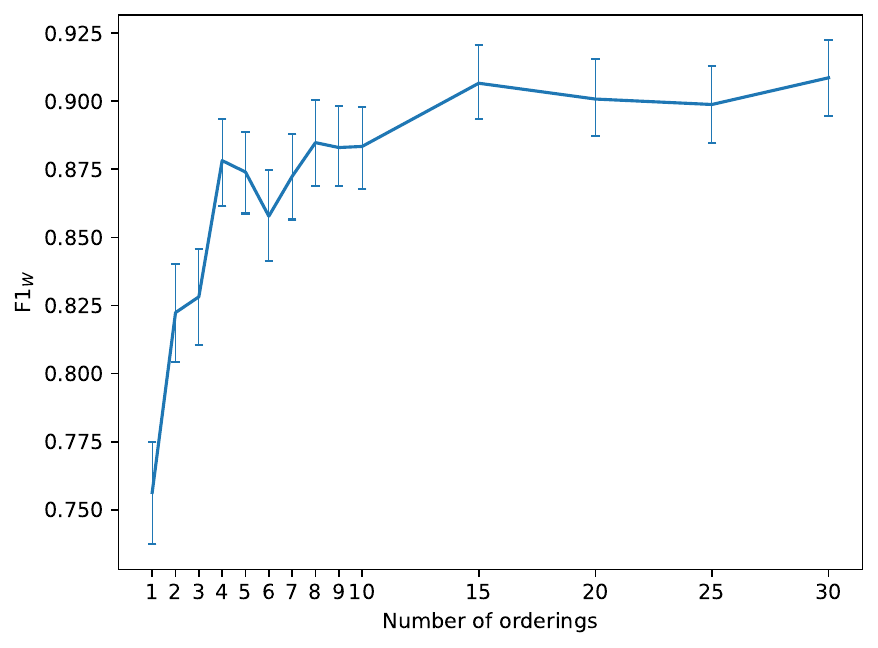}
    \caption{The relationship between prediction performance ($F1_W$) and the number of explored causal orderings in \METHODNAME.}
    \label{fig:ablation_nord}
\end{figure}

\rev{
\subsection{Limitations}
We would like to highlight that the robustness of assumption violations tested in \citet{montagna2023assumption}, such as faithfulness, sufficiency, functional, and others, are not covered in our empirical evaluation. Instead, assuming stationarity and causal sufficiency, our robustness checks investigate variations in sample size, number of variables and size of lagged relationships. As a result, any robustness claims we make throughout this work pertain specifically to those data characteristics we tested, and not the ones investigated by \citet{montagna2023assumption}. Future work should investigate our method's robustness to non-stationary and confounded settings.

Another important consideration is that \METHODNAME\ relies on CAM pruning and hence inherits its limitations. However, by first recovering the theoretically grounded transitive closure, we provide a stronger foundation for subsequent pruning than single-ordering methods can achieve.

}

\section{Related work}\label{sec:related}

\subsection{Ordering-Based Causal Discovery}
\label{sec:ordering_related_workds}
Representing a DAG by one of its valid topological orderings \citep{Verma1990CausalExpressiveness} reduces structure learning to a permutation search followed by edge selection.  Early work framed this idea as a discrete optimisation problem: greedy MCMC over permutations \citep{Friedman2003BeingNetworks}, hill-climbing with dynamic programming \citep{Teyssier2005Ordering-BasedNetworks}, arc-reversal searches \citep{Park2017BayesianOrder}, and restricted-MLE procedures such as CAM \citep{buhlmann2014cam}.  More recent combinatorial schemes enforce sparsity via $\ell_{0}$‐penalised likelihoods, yielding the “sparsest permutation” estimators with provable consistency guarantees \citep{Raskutti2018LearningPermutations,Solus2021ConsistencyAlgorithms,Lam2022GreedyAlgorithm}.  Reinforcement-learning formulations further cast the permutation search as a sequential decision process, amortising exploration across datasets \citep{Wang2021Ordering-BasedLearning}.

Beyond score optimisation, identifiability can be strengthened by exploiting distributional or algebraic asymmetries.  In linear additive models, sequentially peeling off leaf nodes from the precision matrix recovers the causal order under heteroscedastic noise assumptions \citep{Ghoshal2018LearningComplexity,Chen2019OnAssumption}.  Deterministic functional constraints are handled by Determinism-aware GES (DGES), which first clusters exact relations and then performs exact search within each cluster, using determinism itself as an unambiguous ordering cue \citep{Loka2024DGES}.  Non-Gaussianity offers an alternative route: LiNGAM identifies a unique ordering via independent-component analysis, a principle extended to functional data in Func-LiNGAM \citep{Shimizu2006LiNGAM}.

Scalability to high-dimensional, nonlinear settings has recently been advanced through continuous relaxations and deep generative models.  CaPS estimates Hessian diagonals of the log-likelihood to iteratively detect leaves, unifying linear and nonlinear mechanisms and accelerating pruning with a “parent score’’ metric \citep{Zhang2024CaPS}.  Recent advances in ordering-based causal discovery further refine \rev{and generalise} score-based estimation strategies. SCORE \citep{Rolland2022ScoreModels} leverages score matching techniques to iteratively identify and remove leaf nodes, specifically utilizing variance estimates of the Hessian diagonal. Building upon similar principles, DAS \citep{montagna2023scalable} enhances scalability by efficiently estimating Hessian diagonals, significantly reducing computational overhead. DiffAN trains a denoising diffusion model to approximate the score-function Jacobian, introducing a deciduous update rule that circumvents network retraining during iterative leaf removal, thus scaling ordering discovery to hundreds of variables \citep{sanchez2023diffusion}.

\rev{More recently, there have been notable efforts to generalise the score-matching framework. For instance, NoGAM \citep{montagna2023causal} generalises ordering-based approaches beyond Gaussian noise assumptions, employing kernelised score estimates to accommodate a wider range of data distributions. Furthermore, \citet{liu2024causal} relaxes the assumption that all models must be either linear or nonlinear and considers problems with mixed models, notably also leveraging parallel processing to improve scalability.}

Together, these developments illustrate a shift from discrete combinatorics to differentiable optimisation, while preserving the core insight that a well-chosen causal ordering sharply narrows the search for a faithful DAG. None of these works explore combining multiple valid causal orderings to recover the full adjacency matrix.

\subsection{Hessian of the Log-likelihood}
Estimating $\mH (\log p(\rvx))$ is the most expensive task of the ordering algorithm. Our baseline \citep{Rolland2022ScoreModels} proposes an extension of \citet{Li2018GradientModels} which utilises Stein's identity over an RBF kernel \citep{Scholkopf2002LearningKernels}. \citeauthor{Rolland2022ScoreModels}'s method cannot obtain gradient estimates at positions out of the training samples. Therefore, evaluating the Hessian over a subsample of the training dataset is impossible. Other promising kernel-based approaches rely on spectral decomposition to solve this \citep{Shi2018ADistributions} issue and constitute promising future directions. Most importantly, computing the kernel matrix is expensive for memory and computation on $n$. There are, however, methods \citep{Achlioptas2001SamplingMethods,Halko2011FindingDecompositions,Si2017MemoryApproximation} that help to scale kernel techniques not considered in the present work. Other approaches are also possible with deep likelihood methods such as normalising flows \citep{Durkan2019NeuralFlows,Dinh2016DENSITYNVP} and further computing the Hessian via backpropagation. This would require two backpropagation passes giving $O(d^2)$ complexity and be less scalable than denoising diffusion. Indeed, preliminary experiments proved impractical in our high-dimensional settings. 

We use DPMs because they can efficiently approximate the Hessian with a single backpropagation pass while allowing Hessian evaluation on a subsample of the training dataset. It has been shown \citep{Song2019GenerativeDistribution} that denoising diffusion can better capture the score than simple denoising \citep{Vincent2011AAutoencoders} because noise at multiple scales explores regions of low data density.

\subsection{Causal Discovery for Time Series}

Granger’s seminal definition of causality for time series—past $X$ improves the prediction of future $Y$—still underpins most modern approaches \citep{granger1969investigating}.  Structural causal models (SCMs) extend this idea to permit intervention semantics, latent variables, and cycles \citep{bongers2021foundations}.  Constraint-based algorithms such as PCMCI and its refinement PCMCI$+$ adapt the PC procedure to lagged conditional‐independence testing, enabling scalable false-discovery control in autocorrelated, high-dimensional settings \citep{runge2020discovering}.  Functional‐form assumptions provide stronger identifiability: TiMINo employs additive-noise regressions with independence tests on residuals \citep{peters2013causal}, while VARLiNGAM couples a linear VAR with non-Gaussian errors and independent-component analysis to recover causal ordering \citep{hyvarinen2010estimation}.

Score-based and deep-learning methods further relax linearity and stationarity.  DYNOTEARS casts structure learning as a single differentiable optimisation problem over lagged and contemporaneous edges with an acyclicity constraint \citep{pamfil2020dynotears}.  TCDF trains attention-based CNNs and validates candidates through \emph{in-silico} interventions \citep{nauta2019causal}, whereas CausalFormer augments Transformers with causality-aware attention to handle long sequences \citep{kong2024causalformer}.  Continuous-time dynamics can now be unveiled with sparse Neural ODEs that yield interpretable differential systems from irregular samples \citep{aliee2023neuralode}.  Information-theoretic criteria such as transfer entropy generalise Granger tests to nonlinear interactions, though density estimation remains costly in high dimensions \citep{schreiber2000measuring}. \rev{Most recently, PICK introduces score-matching algorithms to temporal causal discovery while notably reducing pruning time by exploiting the variance of the scores, offering an alternative path to efficiency~\citep{chen2024score}.} Recent surveys also emphasise persistent challenges—hidden confounders, non-stationarity, and computation at scale \citep{gong2023temporaldisc,assaad2022survey,Moraffah2021temporal}.  Our method advances the field by combining the statistical rigour of functional approaches with the scalability of continuous optimisation while accommodating nonlinearities typical of real-world data.

\section{Conclusion}\label{sec:conclusion}

In this work, we introduced \METHODNAME, a diffusion-based approach leveraging multiple causal orderings to address the challenge of temporal causal discovery. While previous single-ordering methods were primarily developed in the context of static causal discovery, our work extends the causal ordering framework explicitly to the temporal setting. This temporal context inherently incorporates the causal temporality principle, where variables can only causally influence future variables, not past ones. Unlike traditional single-ordering methods, \METHODNAME\ effectively captures complementary information by aggregating multiple valid causal orderings, thereby reconstructing the transitive closure of the underlying temporal DAG. We formalized the theoretical benefits of this multi-ordering strategy, demonstrating its capacity to mitigate spurious dependencies and enhance robustness in causal inference. Our empirical results on synthetic datasets clearly illustrate the superiority of \METHODNAME\ compared to existing state-of-the-art baselines in terms of accuracy and scalability, \rev{while staying competitive and the best on average on real data.} By exploiting the inherent frequency domain characteristics of diffusion steps, our method provides nuanced insights into both coarse and fine-grained temporal causal interactions. Future research directions include exploring additional aggregation strategies for causal orderings, extending our method to non-stationary environments, and further optimizing computational efficiency for large-scale applications.

\subsubsection*{Acknowledgments}
We would like to thank Ricardo Silva and anonymous reviewers for valuable comments and suggestions. We acknowledge the support of the UKRI AI programme, and the Engineering and Physical Sciences Research Council (EPSRC), for the Causality in Healthcare AI Hub [grant number EP/Y028856/1].

\bibliography{main}

@inproceedings{pamfil2020dynotears,
  title={Dynotears: Structure learning from time-series data},
  author={Pamfil, Roxana and Sriwattanaworachai, Nisara and Desai, Shaan and Pilgerstorfer, Philip and Georgatzis, Konstantinos and Beaumont, Paul and Aragam, Bryon},
  booktitle={International Conference on Artificial Intelligence and Statistics},
  pages={1595--1605},
  year={2020},
  organization={PMLR}
}

@article{chickering1996learning,
  title={Learning Bayesian networks is NP-complete},
  author={Chickering, David Maxwell},
  journal={Learning from data: Artificial intelligence and statistics V},
  pages={121--130},
  year={1996},
  publisher={Springer}
}

@inproceedings{Shimizu2006LiNGAM,
  title={A Linear Non-Gaussian Acyclic Model for Causal Discovery},
  author={Shimizu, Shohei and Hoyer, Patrik O. and Hyvärinen, Aapo and Kerminen, Aki},
  booktitle={Journal of Machine Learning Research},
  volume={7},
  pages={2003--2030},
  year={2006}
}

@inproceedings{Loka2024DGES,
  title={On causal discovery in the presence of deterministic relations},
  author={Li, Loka and Dai, Haoyue and Al Ghothani, Hanin and Huang, Biwei and Zhang, Jiji and Harel, Shahar and Bentwich, Isaac and Chen, Guangyi and Zhang, Kun},
  journal={Advances in Neural Information Processing Systems},
  volume={37},
  pages={130920--130952},
  year={2024}
}

@inproceedings{Zhang2024CaPS,
  title={Ordering-Based Causal Discovery for Linear and Nonlinear Relations},
  author={Xu, Zhuopeng and Li, Yujie and Liu, Cheng and Gui, Ning},
  journal={arXiv preprint arXiv:2410.05890},
  year={2024}
}

@inproceedings{runge2020discovering,
  title={Discovering contemporaneous and lagged causal relations in autocorrelated nonlinear time series datasets},
  author={Runge, Jakob},
  booktitle={Conference on Uncertainty in Artificial Intelligence},
  pages={1388--1397},
  year={2020},
  organization={PMLR}
}

@article{schreiber2000measuring,
  title={Measuring information transfer},
  author={Schreiber, Thomas},
  journal={Physical Review Letters},
  volume={85},
  number={2},
  pages={461--464},
  year={2000},
  publisher={American Physical Society}
}

@article{kong2024causalformer,
  title={CausalFormer: An Interpretable Transformer for Temporal Causal Discovery},
  author={Kong, Xinyu and Lu, Ming},
  journal={arXiv preprint arXiv:2406.16708},
  year={2024}
}

@inbook{Takens1981,
  title = {Detecting strange attractors in turbulence},
  ISBN = {9783540389453},
  ISSN = {1617-9692},
  url = {http://dx.doi.org/10.1007/BFb0091924},
  DOI = {10.1007/bfb0091924},
  booktitle = {Dynamical Systems and Turbulence,  Warwick 1980},
  publisher = {Springer Berlin Heidelberg},
  author = {Takens,  Floris},
  year = {1981},
  pages = {366–381}
}

@article{Kahn1962,
author = {Kahn, A. B.},
title = {Topological sorting of large networks},
year = {1962},
issue_date = {Nov. 1962},
publisher = {Association for Computing Machinery},
address = {New York, NY, USA},
volume = {5},
number = {11},
issn = {0001-0782},
doi = {10.1145/368996.369025},
journal = {Commun. ACM},
month = nov,
pages = {558–562},
numpages = {5}
}

@article{KASCENAS2023102963,
title = {The role of noise in denoising models for anomaly detection in medical images},
journal = {Medical Image Analysis},
volume = {90},
pages = {102963},
year = {2023},
issn = {1361-8415},
doi = {https://doi.org/10.1016/j.media.2023.102963},
url = {https://www.sciencedirect.com/science/article/pii/S1361841523002232},
author = {Antanas Kascenas and Pedro Sanchez and Patrick Schrempf and Chaoyang Wang and William Clackett and Shadia S. Mikhael and Jeremy P. Voisey and Keith Goatman and Alexander Weir and Nicolas Pugeault and Sotirios A. Tsaftaris and Alison Q. O’Neil},
}

@article{bongers2021foundations,
  title={Foundations of structural causal models with cycles and latent variables},
  author={Bongers, Stephan and Forr{\'e}, Patrick and Peters, Jonas and Mooij, Joris M},
  journal={The Annals of Statistics},
  volume={49},
  number={5},
  pages={2885--2915},
  year={2021},
  publisher={Institute of Mathematical Statistics}
}

@inproceedings{aliee2023neuralode,
  title={Beyond Predictions in Neural ODEs: Identification and Interventions},
  author={Aliee, Hadi and Bongers, Stephan and Mooij, Joris},
  booktitle={Proceedings of the 40th International Conference on Machine Learning},
  year={2023}
}

@article{marbach2009generating,
  title={Generating realistic in silico gene networks for performance assessment of reverse engineering methods},
  author={Marbach, Daniel and Schaffter, Thomas and Mattiussi, Claudio and Floreano, Dario},
  journal={Journal of computational biology},
  volume={16},
  number={2},
  pages={229--239},
  year={2009},
  publisher={Mary Ann Liebert, Inc. 2 Madison Avenue Larchmont, NY 10538 USA}
}

@article{Runge2019,
  title = {Inferring causation from time series in Earth system sciences},
  volume = {10},
  ISSN = {2041-1723},
  url = {http://dx.doi.org/10.1038/s41467-019-10105-3},
  DOI = {10.1038/s41467-019-10105-3},
  number = {1},
  journal = {Nature Communications},
  publisher = {Springer Science and Business Media LLC},
  author = {Runge,  Jakob and Bathiany,  Sebastian and Bollt,  Erik and Camps-Valls,  Gustau and Coumou,  Dim and Deyle,  Ethan and Glymour,  Clark and Kretschmer,  Marlene and Mahecha,  Miguel D. and Muñoz-Marí,  Jordi and van Nes,  Egbert H. and Peters,  Jonas and Quax,  Rick and Reichstein,  Markus and Scheffer,  Marten and Sch\"{o}lkopf,  Bernhard and Spirtes,  Peter and Sugihara,  George and Sun,  Jie and Zhang,  Kun and Zscheischler,  Jakob},
  year = {2019},
  month = jun 
}

@article{peters2013causal,
  title={Causal inference on time series using restricted structural equation models},
  author={Peters, Jonas and Janzing, Dominik and Sch{\"o}lkopf, Bernhard},
  journal={Advances in neural information processing systems},
  volume={26},
  year={2013}
}

@article{granger1969investigating,
  title = {Investigating Causal Relations by Econometric Models and Cross-spectral Methods},
  volume = {37},
  ISSN = {0012-9682},
  url = {http://dx.doi.org/10.2307/1912791},
  DOI = {10.2307/1912791},
  number = {3},
  journal = {Econometrica},
  publisher = {JSTOR},
  author = {Granger,  C. W. J.},
  year = {1969},
  month = aug,
  pages = {424}
}

@article{hyvarinen2010estimation,
  title={Estimation of a structural vector autoregression model using non-gaussianity.},
  author={Hyv{\"a}rinen, Aapo and Zhang, Kun and Shimizu, Shohei and Hoyer, Patrik O},
  journal={Journal of Machine Learning Research},
  volume={11},
  number={5},
  year={2010}
}

@article{nauta2019causal,
  title={Causal discovery with attention-based convolutional neural networks},
  author={Nauta, Meike and Bucur, Doina and Seifert, Christin},
  journal={Machine Learning and Knowledge Extraction},
  volume={1},
  number={1},
  pages={19},
  year={2019},
  publisher={MDPI}
}

@inproceedings{montagna2023causal,
  title={Causal discovery with score matching on additive models with arbitrary noise},
  author={Montagna, Francesco and Noceti, Nicoletta and Rosasco, Lorenzo and Zhang, Kun and Locatello, Francesco},
  booktitle={Conference on Causal Learning and Reasoning},
  pages={726--751},
  year={2023},
  organization={PMLR}
}

@article{buhlmann2014cam,
author = {Peter B{\"u}hlmann and Jonas Peters and Jan Ernest},
title = {{CAM: Causal additive models, high-dimensional order search and penalized regression}},
volume = {42},
journal = {The Annals of Statistics},
number = {6},
publisher = {Institute of Mathematical Statistics},
pages = {2526 -- 2556},
keywords = {graphical modeling, intervention calculus, Nonparametric regression, regularized estimation, Sparsity, structural equation model},
year = {2014}
}

@inproceedings{montagna2023scalable,
  title={Scalable causal discovery with score matching},
  author={Montagna, Francesco and Noceti, Nicoletta and Rosasco, Lorenzo and Zhang, Kun and Locatello, Francesco},
  booktitle={Conference on Causal Learning and Reasoning},
  pages={752--771},
  year={2023},
  organization={PMLR}
}

@article{hiraguchi1955,
  title   = {On the dimension of orders},
  author  = {Hiraguchi, Taro},
  journal = {Science Reports of the Kanazawa University},
  volume  = {4},
  pages   = {1--20},
  year    = {1955}
}

@misc{Eichler2012,
  title = {Causal Inference in Time Series Analysis},
  ISBN = {9781119945710},
  ISSN = {1940-6347},
  url = {http://dx.doi.org/10.1002/9781119945710.ch22},
  DOI = {10.1002/9781119945710.ch22},
  journal = {Causality},
  publisher = {Wiley},
  author = {Eichler,  Michael},
  year = {2012},
  month = jun,
  pages = {327–354}
}

@article{Rumelhart1986LearningErrors,
	title        = {{Learning representations by back-propagating errors}},
	author       = {Rumelhart, David E and Hinton, Geoffrey E and Williams, Ronald J},
	year         = 1986,
	journal      = {Nature},
	volume       = 323,
	number       = 6088,
	pages        = {533--536}
}

@article{Moraffah2021temporal,
	title        = {Causal inference for time series analysis: problems, methods and evaluation},
	author       = {Raha Moraffah and Paras Sheth and Mansooreh Karami and Anchit Bhattacharya and Qianru Wang and Anique Tahir and Adrienne Raglin and Huan Liu},
	year         = 2021,
	journal      = {Knowledge and Information Systems},
	publisher    = {Springer London},
	volume       = 63,
	number       = 12
}

@article{friston2003dynamic,
	title        = {Dynamic causal modelling},
	author       = {Friston, Karl J and Harrison, Lee and Penny, Will},
	year         = 2003,
	month        = 8,
	journal      = {Neuroimage},
	publisher    = {Elsevier},
	volume       = 19,
	number       = 4,
	pages        = {1273--1302},
	issn         = {1053-8119},
	pmid         = 12948688,
}

@inproceedings{gong2023temporaldisc,
	title        = {Causal Discovery from Temporal Data},
	author       = {Gong, Chang and Yao, Di and Zhang, Chuzhe and Li, Wenbin and Bi, Jingping and Du, Lun and Wang, Jin},
	year         = 2023,
	booktitle    = {Proc. of the 29th ACM SIGKDD Conference on Knowledge Discovery and Data Mining},
	publisher    = {Association for Computing Machinery},
	address      = {New York, NY, USA},
	series       = {KDD '23},
	pages        = {5803–5804},
	numpages     = 2
}

@book{Hume1904HUMECH,
	title        = {Enquiry Concerning Human Understanding},
	author       = {David Hume},
	year         = 1904,
	publisher    = {Clarendon Press}
}

@ARTICLE{10.3389/fpsyg.2013.00178,

AUTHOR={Rankin, Michelle L. and McCormack, Teresa },

TITLE={The temporal priority principle: at what age does this develop?},

JOURNAL={Frontiers in Psychology},

VOLUME={4},

YEAR={2013},

URL={https://www.frontiersin.org/journals/psychology/articles/10.3389/fpsyg.2013.00178},

DOI={10.3389/fpsyg.2013.00178},

ISSN={1664-1078},
}

@misc{HoraceHe2021Functorch:PyTorch,
	title        = {{functorch: JAX-like composable function transforms for PyTorch}},
	author       = {Horace He, Richard Zou},
	year         = 2021,
	howpublished = {https://github.com/pytorch/functorch}
}

@article{ho2020denoising,
	title        = {Denoising diffusion probabilistic models},
	author       = {Ho, Jonathan and Jain, Ajay and Abbeel, Pieter},
	year         = 2020,
	journal      = {Advances in Neural Information Processing Systems (NeurIPS)},
	booktitle    = {Advances on Neural Information Processing Systems},
	volume       = 33,
	pages        = {6840--6851}
}

@book{Peters2017ElementsInference,
	title        = {{Elements of causal inference}},
	author       = {Peters, Jonas and Janzing, Dominik and Sch{\"{o}}lkopf, Bernhard},
	year         = 2017,
	booktitle    = {MIT press},
	publisher    = {MIT Press}
}

@article{assaad2022survey,
  title={Survey and evaluation of causal discovery methods for time series},
  author={Assaad, Charles K and Devijver, Emilie and Gaussier, Eric},
  journal={Journal of Artificial Intelligence Research},
  volume={73},
  pages={767--819},
  year={2022}
}

@software{causalNex2021,
author = {Beaumont, Paul and Horsburgh, Ben and Pilgerstorfer, Philip and Droth, Angel and Oentaryo, Richard and Ler, Steven and Nguyen, Hiep and Ferreira, Gabriel Azevedo and Patel, Zain and Leong, Wesley},
month = oct,
title = {{CausalNex}},
url = {https://github.com/quantumblacklabs/causalnex},
year = {2021}
}

@article{dgpTimeSeries2020,
  title={Data Generating Process to Evaluate Causal Discovery Techniques for Time Series Data},
  author={Lawrence, Andrew R. and Kaiser, Marcus and Sampaio, Rui and Sipos, Maksim},
  journal={Causal Discovery and Causality-Inspired Machine Learning Workshop at Neural Information Processing Systems},
  year={2020}
}

@software{dodiscover,
author = {Li, Adam and Lee, Jaron and Montagna, Francesco and Trevino, Chris and Ness, Robert},
license = {MIT},
title = {{Dodiscover: Causal discovery algorithms in Python.}},
url = {https://github.com/py-why/dodiscover},
year = {2024}
}

@article{molder2021sustainable,
  title={Sustainable data analysis with Snakemake},
  author={M{\"o}lder, Felix and Jablonski, Kim Philipp and Letcher, Brice and Hall, Michael B and Tomkins-Tinch, Christopher H and Sochat, Vanessa and Forster, Jan and Lee, Soohyun and Twardziok, Sven O and Kanitz, Alexander and others},
  journal={F1000Research},
  volume={10},
  year={2021},
  publisher={Faculty of 1000 Ltd}
}

@misc{rios2021benchpress,
      title={Benchpress: a scalable and versatile workflow for benchmarking structure learning algorithms for graphical models}, 
      author={Felix L. Rios and Giusi Moffa and Jack Kuipers},
      year={2021},
      eprint={2107.03863},
      archivePrefix={arXiv},
      primaryClass={stat.ML}
}

@incollection{Verma1990CausalExpressiveness,
	title        = {{Causal Networks: Semantics and Expressiveness}},
	author       = {Verma, Thomas and Pearl, Judea},
	year         = 1990,
	booktitle    = {Uncertainty in Artificial Intelligence},
	publisher    = {North-Holland},
	series       = {Machine Intelligence and Pattern Recognition},
	volume       = 9,
	pages        = {69--76},
	editor       = {SHACHTER, Ross D and LEVITT, Tod S and KANAL, Laveen N and LEMMER, John F}
}

@article{Friedman2003BeingNetworks,
	title        = {{Being Bayesian About Network Structure. A Bayesian Approach to Structure Discovery in Bayesian Networks}},
	author       = {Friedman, Nir and Koller, Daphne},
	year         = 2003,
	journal      = {Machine Learning},
	volume       = 50,
	number       = 1,
	pages        = {95--125}
}

@inproceedings{Teyssier2005Ordering-BasedNetworks,
	title        = {{Ordering-Based Search: A Simple and Effective Algorithm for Learning Bayesian Networks}},
	author       = {Teyssier, Marc and Koller, Daphne},
	year         = 2005,
	booktitle    = {Proc. of the Conference on Uncertainty in Artificial Intelligence (UAI)},
	publisher    = {AUAI Press},
	address      = {Arlington, Virginia, USA},
	series       = {UAI'05},
	pages        = {584--590}
}

@article{Park2017BayesianOrder,
	title        = {{Bayesian Network Learning via Topological Order}},
	author       = {Park, Young Woong and Klabjan, Diego},
	year         = 2017,
	journal      = {Journal of Machine Learning Research},
	volume       = 18,
	number       = 99,
	pages        = {1--32}
}

@article{Raskutti2018LearningPermutations,
	title        = {{Learning directed acyclic graph models based on sparsest permutations}},
	author       = {Raskutti, Garvesh and Uhler, Caroline},
	year         = 2018,
	month        = 1,
	journal      = {Stat},
	publisher    = {John Wiley {\&} Sons, Ltd},
	volume       = 7,
	number       = 1,
	pages        = {e183},
}

@inproceedings{Lam2022GreedyAlgorithm,
	title        = {{Greedy Relaxations of the Sparsest Permutation Algorithm}},
	author       = {Lam, Wai-Yin and Andrews, Bryan and Ramsey, Joseph},
	year         = 2022,
	booktitle    = {The 38th Conference on Uncertainty in Artificial Intelligence}
}

@article{Solus2021ConsistencyAlgorithms,
	title        = {{Consistency guarantees for greedy permutation-based causal inference algorithms}},
	author       = {Solus, L and Wang, Y and Uhler, C},
	year         = 2021,
	month        = 8,
	journal      = {Biometrika},
	volume       = 108,
	number       = 4,
	pages        = {795--814}
}

@inproceedings{Wang2021Ordering-BasedLearning,
	title        = {{Ordering-Based Causal Discovery with Reinforcement Learning}},
	author       = {Wang, Xiaoqiang and Du, Yali and Zhu, Shengyu and Ke, Liangjun and Chen, Zhitang and Hao, Jianye and Wang, Jun},
	year         = 2021,
	month        = 8,
	booktitle    = {Proc. of the Thirtieth International Joint Conference on Artificial Intelligence, IJCAI},
	publisher    = {International Joint Conferences on Artificial Intelligence Organization},
	pages        = {3566--3573},
	editor       = {Zhou, Zhi-Hua}
}

@inproceedings{Ghoshal2018LearningComplexity,
	title        = {{Learning linear structural equation models in polynomial time and sample complexity}},
	author       = {Ghoshal, Asish and Honorio, Jean},
	year         = 2018,
	month        = 8,
	booktitle    = {Proc. of the Twenty-First International Conference on Artificial Intelligence and Statistics},
	publisher    = {PMLR},
	series       = {Proc. of Machine Learning Research},
	volume       = 84,
	pages        = {1466--1475},
	editor       = {Storkey, Amos and Perez-Cruz, Fernando}
}

@article{Chen2019OnAssumption,
	title        = {{On causal discovery with an equal-variance assumption}},
	author       = {Chen, Wenyu and Drton, Mathias and Wang, Y Samuel},
	year         = 2019,
	month        = 12,
	journal      = {Biometrika},
	volume       = 106,
	number       = 4,
	pages        = {973--980}
}

@inproceedings{Rolland2022ScoreModels,
	title        = {{Score Matching Enables Causal Discovery of Nonlinear Additive Noise Models}},
	author       = {Rolland, Paul and Cevher, Volkan and Kleindessner, Matthäus and Russell, Chris and Janzing, Dominik and Sch{\"{o}}lkopf, Bernhard and Locatello, Francesco},
	year         = 2022,
	booktitle    = {Proc. of the 39th International Conference on Machine Learning},
	publisher    = {},
	series       = {Proc. of Machine Learning Research},
	volume       = 162,
	pages        = {18741--18753},
	organization = {PMLR}
}

@inproceedings{Li2018GradientModels,
	title        = {{Gradient Estimators for Implicit Models}},
	author       = {Li, Yingzhen and Turner, Richard E},
	year         = 2018,
	journal      = {arXiv preprint arXiv:1705.07107},
	booktitle    = {International Conference on Learning Representations}
}

@article{Scholkopf2002LearningKernels,
	title        = {{Learning with kernels}},
	author       = {Sch{\"{o}}lkopf, Bernhard and Smola, Alexander},
	year         = 2002,
	journal      = {Optimization, and Beyond. MIT press},
	volume       = 1,
	number       = 2
}

@inproceedings{Shi2018ADistributions,
	title        = {{A Spectral Approach to Gradient Estimation for Implicit Distributions}},
	author       = {Shi, Jiaxin and Sun, Shengyang and Zhu, Jun},
	year         = 2018,
	month        = 11,
	booktitle    = {Proc. of the 35th International Conference on Machine Learning},
	publisher    = {PMLR},
	series       = {Proc. of Machine Learning Research},
	volume       = 80,
	pages        = {4644--4653},
	editor       = {Dy, Jennifer and Krause, Andreas}
}

@inproceedings{Achlioptas2001SamplingMethods,
	title        = {{Sampling Techniques for Kernel Methods}},
	author       = {Achlioptas, Dimitris and Mcsherry, Frank and Sch{\"{o}}lkopf, Bernhard},
	year         = 2001,
	booktitle    = {Advances in Neural Information Processing Systems (NeurIPS)},
	publisher    = {MIT Press},
	volume       = 14,
	pages        = {},
	editor       = {Dietterich, T and Becker, S and Ghahramani, Z}
}

@article{Halko2011FindingDecompositions,
	title        = {{Finding Structure with Randomness: Probabilistic Algorithms for Constructing Approximate Matrix Decompositions}},
	author       = {Halko, N and Martinsson, P G and Tropp, J A},
	year         = 2011,
	journal      = {SIAM Review},
	volume       = 53,
	number       = 2,
	pages        = {217--288},
}

@article{Si2017MemoryApproximation,
	title        = {{Memory Efficient Kernel Approximation}},
	author       = {Si, Si and Hsieh, Cho-Jui and Dhillon, Inderjit S},
	year         = 2017,
	journal      = {Journal of Machine Learning Research},
	volume       = 18,
	number       = 20,
	pages        = {1--32},
}

@inproceedings{Durkan2019NeuralFlows,
	title        = {{Neural Spline Flows}},
	author       = {Durkan, Conor and Bekasov, Artur and Murray, Iain and Papamakarios, George},
	year         = 2019,
	booktitle    = {Proc. Advances in Neural Information Processing Systems (NeurIPS)}
}

@misc{Dinh2016DENSITYNVP,
	title        = {{DENSITY ESTIMATION USING REAL NVP}},
	author       = {Dinh, Laurent and Google, Jascha Sohl-Dickstein and Samy, Brain and Brain, Bengio Google},
	year         = 2016,
	month        = 12,
	journal      = {International Conference on Learning Representations}
}

@article{Song2019GenerativeDistribution,
	title        = {{Generative Modeling by Estimating Gradients of the Data Distribution}},
	author       = {Song, Yang and Ermon, Stefano},
	year         = 2019,
	journal      = {Advances in Neural Information Processing Systems (NeurIPS)},
	volume       = 32
}

@article{Vincent2011AAutoencoders,
	title        = {{A connection between score matching and denoising autoencoders}},
	author       = {Vincent, Pascal},
	year         = 2011,
	month        = 12,
	journal      = {Neural Computation},
	volume       = 23,
	number       = 7,
	pages        = {1661--1674}
}

@inproceedings{
sanchez2023diffusion,
title={Diffusion Models for Causal Discovery via Topological Ordering},
author={Pedro Sanchez and Xiao Liu and Alison Q O'Neil and Sotirios A. Tsaftaris},
booktitle={The Eleventh International Conference on Learning Representations },
year={2023},
url={https://openreview.net/forum?id=Idusfje4-Wq}
}

@inproceedings{
cheng2024causaltime,
title={CausalTime: Realistically Generated Time-series for Benchmarking of Causal Discovery},
author={Yuxiao Cheng and Ziqian Wang and Tingxiong Xiao and Qin Zhong and Jinli Suo and Kunlun He},
booktitle={The Twelfth International Conference on Learning Representations},
year={2024},
url={https://openreview.net/forum?id=iad1yyyGme}
}

@article{chen2024score,
  title={Score-matching-based Structure Learning for Temporal Data on Networks},
  author={Chen, Hao and Yi, Kai and Liu, Lin and Wang, Yu Guang},
  journal={arXiv preprint arXiv:2412.07469},
  year={2024}
}

@InProceedings{liu2024causal,
  title = 	 {Causal Discovery with Mixed Linear and Nonlinear Additive Noise Models: A Scalable Approach},
  author =       {Liu, Wenqin and Huang, Biwei and Gao, Erdun and Ke, Qiuhong and Bondell, Howard and Gong, Mingming},
  booktitle = 	 {Proceedings of the Third Conference on Causal Learning and Reasoning},
  pages = 	 {1237--1263},
  year = 	 {2024},
  editor = 	 {Locatello, Francesco and Didelez, Vanessa},
  volume = 	 {236},
  series = 	 {Proceedings of Machine Learning Research},
  month = 	 {01--03 Apr},
  publisher =    {PMLR},
  pdf = 	 {https://proceedings.mlr.press/v236/liu24b/liu24b.pdf},
  url = 	 {https://proceedings.mlr.press/v236/liu24b.html}
}

@inproceedings{montagna2023assumption,
 author = {Montagna, Francesco and Mastakouri, Atalanti and Eulig, Elias and Noceti, Nicoletta and Rosasco, Lorenzo and Janzing, Dominik and Aragam, Bryon and Locatello, Francesco},
 booktitle = {Advances in Neural Information Processing Systems},
 editor = {A. Oh and T. Naumann and A. Globerson and K. Saenko and M. Hardt and S. Levine},
 pages = {47339--47378},
 publisher = {Curran Associates, Inc.},
 title = {Assumption violations in causal discovery and the robustness of score matching},
 volume = {36},
 year = {2023}
}
\bibliographystyle{tmlr}

\appendix

\newpage
\section{Order theory definitions}
\label{app:order_definitions}

Here, we establish the foundational definitions used in order theory.

\begin{definition}[Partial Order]
Let \( V \) be a set. A binary relation \( \leq \) on \( V \) is a \emph{partial order} if, for all \( x, y, z \in V \), it satisfies:
\begin{enumerate}[label=(\roman*)]
    \item \textbf{Reflexivity}: \( x \leq x \).
    \item \textbf{Antisymmetry}: If \( x \leq y \) and \( y \leq x \), then \( x = y \).
    \item \textbf{Transitivity}: If \( x \leq y \) and \( y \leq z \), then \( x \leq z \).
\end{enumerate}
The pair \( (V, \leq) \) is called a \emph{partially ordered set} (or \emph{poset}).
\end{definition}

\begin{definition}[Strict Partial Order]
A binary relation \( \prec \) on \( V \) is a \emph{strict partial order} if it is:
\begin{enumerate}[label=(\roman*)]
    \item \textbf{Irreflexive}: For all \( x \in V \), \( x \not\prec x \).
    \item \textbf{Transitive}: If \( x \prec y \) and \( y \prec z \), then \( x \prec z \).
\end{enumerate}
\end{definition}

\begin{definition}[Linear Extension]
For a poset \( (V, \leq) \), a \emph{linear extension} is a total order \( \preceq \) on \( V \) such that if \( x \leq y \), then \( x \preceq y \) for all \( x, y \in V \). That is, \( \preceq \) extends \( \leq \) into a total ordering consistent with the partial order.
\end{definition}

\begin{definition}[Reachability Relation]
For a directed graph \( G = (V, E) \), the \emph{reachability relation} \( R \subseteq V \times V \) is defined as:
\[
(x, y) \in R \quad \Longleftrightarrow \quad \text{there exists a directed path from \( x \) to \( y \) in \( G \)}.
\]
If \( G \) is a DAG, \( R \) is a partial order on \( V \) (reflexive, antisymmetric, and transitive), as every vertex is reachable from itself via a trivial path.
\end{definition}

\begin{definition}[Transitive Closure]
For a directed graph \( G = (V, E) \) with reachability relation \( R \), the \emph{transitive closure} of \( G \) is the graph:
\[
G^+ = (V, E^+),
\]
where:
\[
E^+ = \{ (x, y) \in V \times V : (x, y) \in R \text{ and } x \neq y \}.
\]
Thus, \( \gG^+ \) contains an edge \( (x, y) \) if and only if there is a non-trivial directed path from \( x \) to \( y \) in \( G \), excluding self-loops.
\end{definition}




\newpage
\section{Supplementary Results}

\subsection{Alternative metrics}
Figures \ref{fig:app_w_alt} and \ref{fig:app_s_alt} supplement our main simulation results. In these, we report precision and recall on both window and summary graphs. The most important observation here is that the top-performing methods almost never provide non-existent edges (i.e. high precision), but some undetected edges still remain (i.e. recall lower than 1). Future work could focus on improving edge detection while maintaining high precision levels.

\begin{figure}[hp]
    \centering
    \begin{subfigure}{1.0\textwidth}
        \centering
        \includegraphics[width=\textwidth]{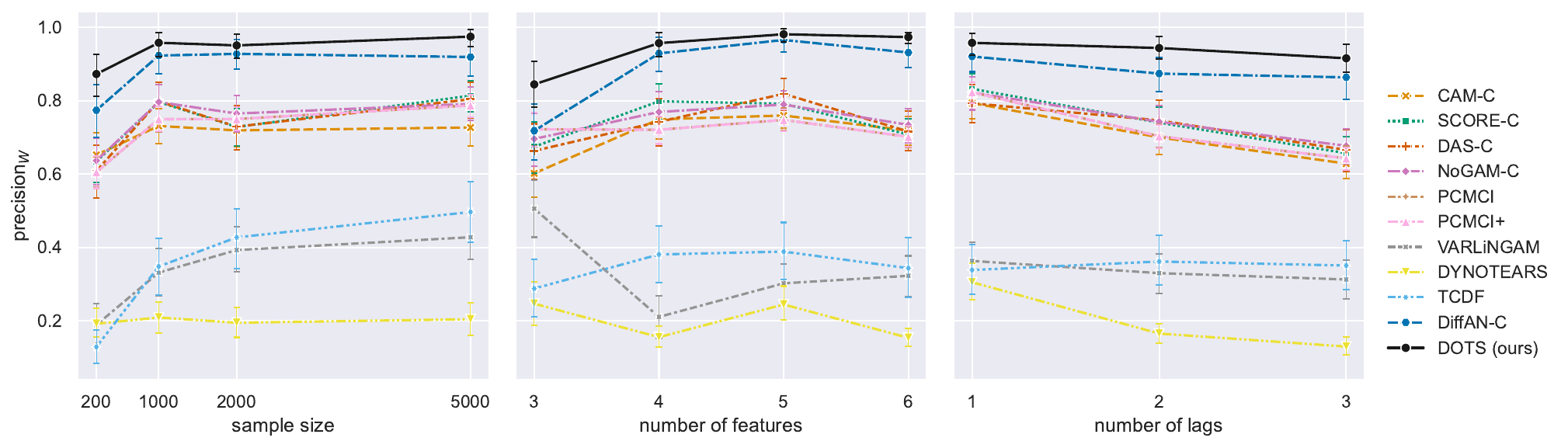}
        \caption{Precision ($\text{precision}_W$)}
    \end{subfigure}
    \begin{subfigure}{1.0\textwidth}
        \centering
        \includegraphics[width=\textwidth]{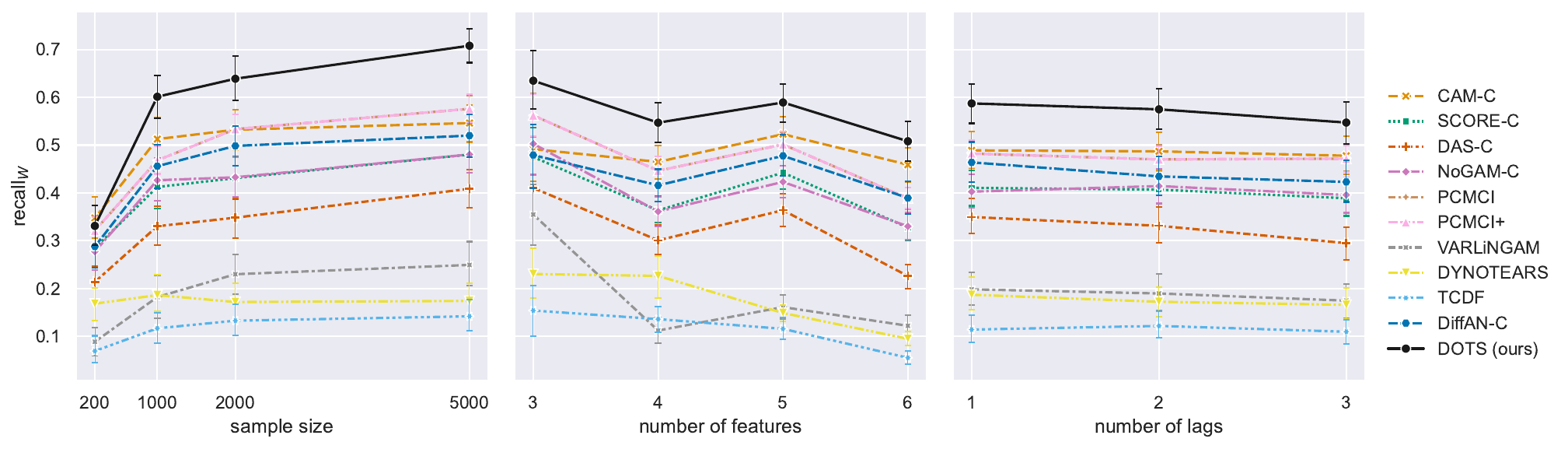}
        \caption{Recall ($\text{recall}_W$)}
    \end{subfigure}

    \caption{Precision and recall (higher is better) on simulated \textbf{window} graphs. TiMINo does not provide window graphs.}
    \label{fig:app_w_alt}
\end{figure}

\begin{figure}[hp]
    \centering
    \begin{subfigure}{1.0\textwidth}
        \centering
        \includegraphics[width=\textwidth]{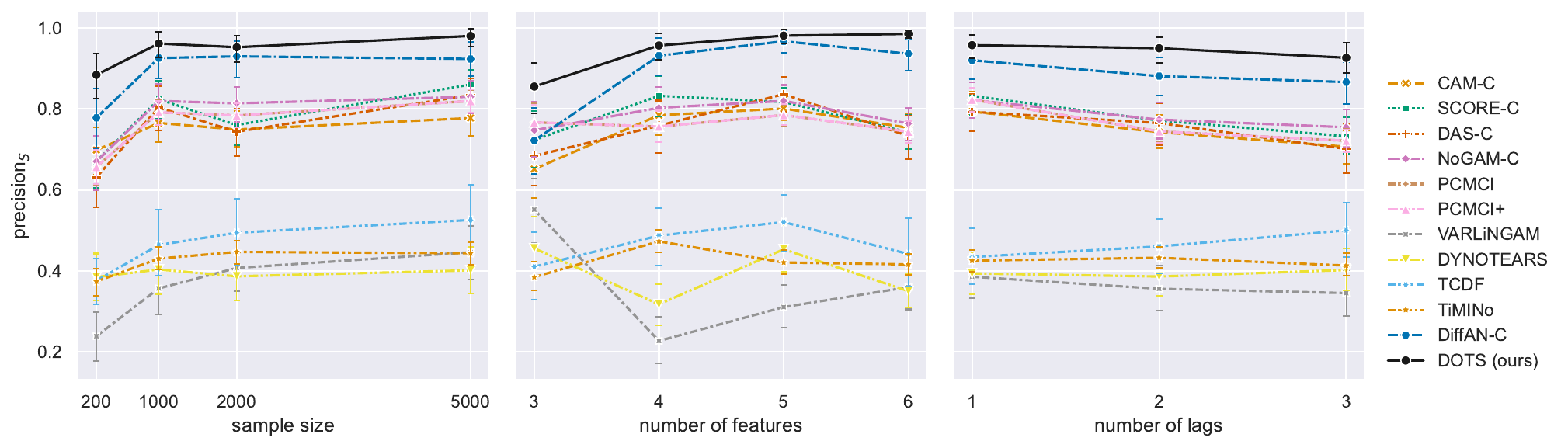}
        \caption{Precision ($\text{precision}_S$)}
    \end{subfigure}
    \begin{subfigure}{1.0\textwidth}
        \centering
        \includegraphics[width=\textwidth]{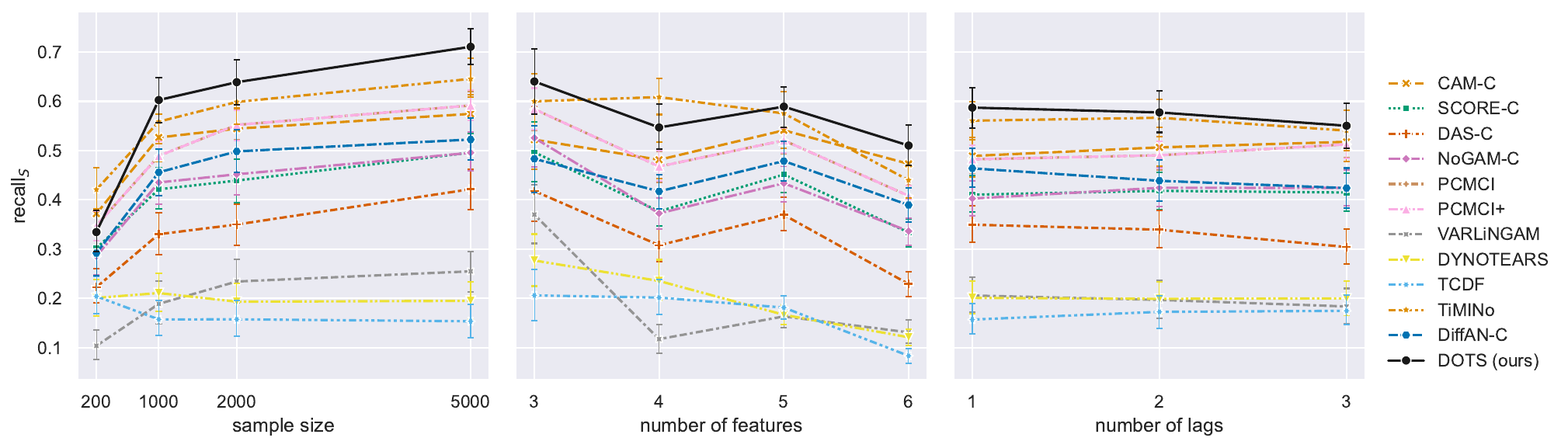}
        \caption{Recall ($\text{recall}_S$)}
    \end{subfigure}
    \caption{Precision and recall (higher is better) on simulated \textbf{summary} graphs.}
    \label{fig:app_s_alt}
\end{figure}

\newpage
\subsection{Effect of the temporal constraint}
Figure~\ref{fig:app_const} shows the influence of the temporal constraint applied to ordering-based methods to ensure they return a graph that respects the arrow of time. Most methods show some mild performance improvement and mostly in higher precision. Interestingly, diffusion-based methods (\emph{DiffAN} and \METHODNAME) do not exhibit such a benefit, which could be partly due to already high precision achieved even without the constraint (i.e. not much room for further improvement).

\begin{figure}[hp]
    \centering
    \begin{subfigure}{1.0\textwidth}
        \centering
        \includegraphics[width=\textwidth]{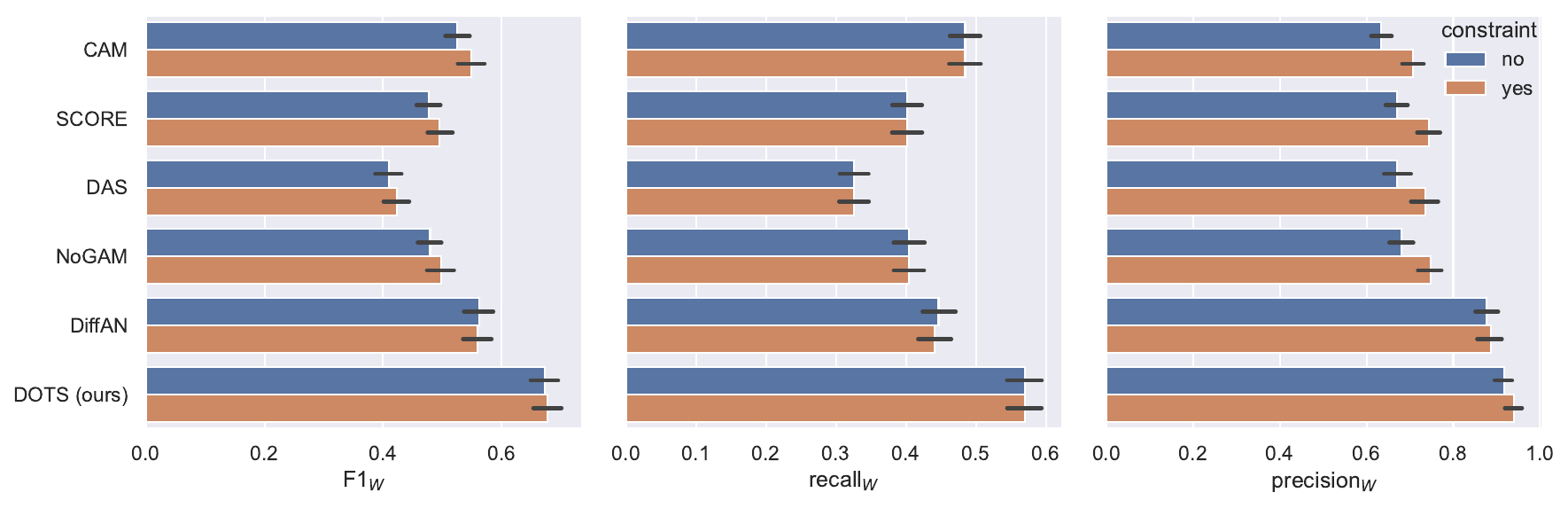}
        \caption{Window graphs}
    \end{subfigure}
    \begin{subfigure}{1.0\textwidth}
        \centering
        \includegraphics[width=\textwidth]{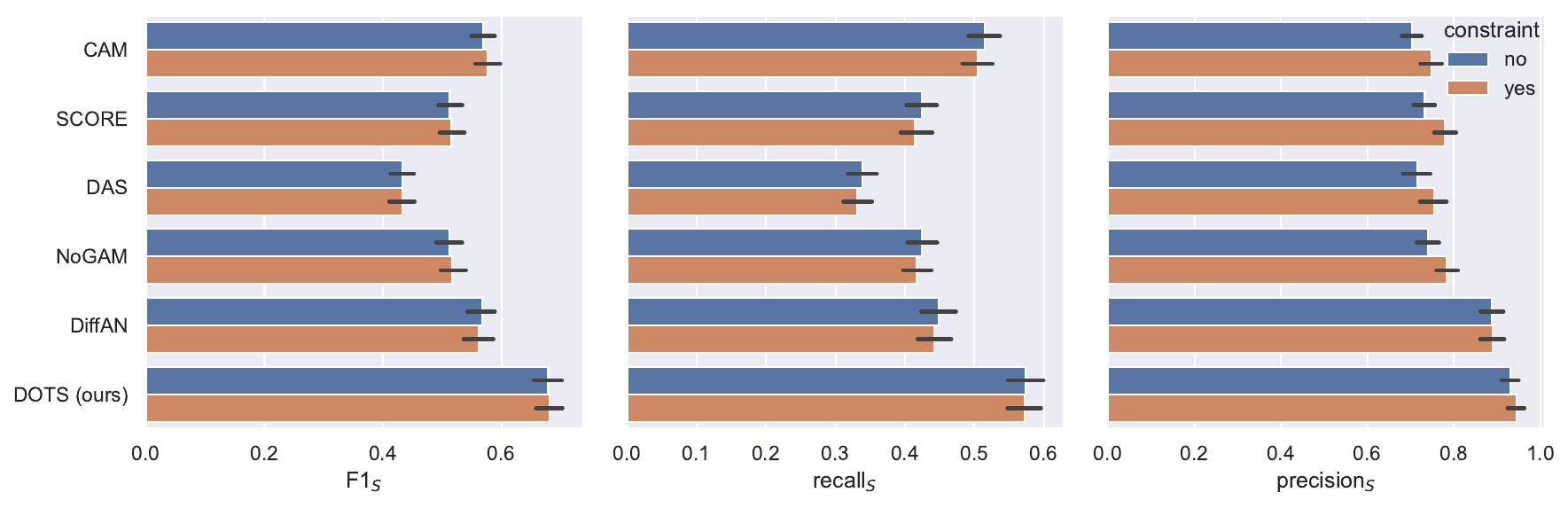}
        \caption{Summary graphs}
    \end{subfigure}

    \caption{The effect of the temporal constraint on the performance of score-matching algorithms.}
    \label{fig:app_const}
\end{figure}

\newpage
\rev{
\section{Ablations}\label{app:ablations}
\subsection{Maximum lag}
\METHODNAME\ has a hyperparameter that sets the maximum lag ($\tau_{\max}$) to be considered in the predicted graph. Here we study its influence on predictive performance. To this end, we run simulations with 3-node graphs, $T{=}5000$ and $\tau \in \{1,2,3\}$, all repeated 100 times. Figure~\ref{fig:ablation_nlag} presents the results.

Two main observations emerge. First, selecting $\tau_{\max}$ shorter than ground truth can have an obvious significant negative impact on performance as it prevents \METHODNAME\ from identifying the necessary longer relationships. However, setting $\tau_{\max}$ to values larger than the truth maintains good performance as compared to when the hyperparameter matches the lag size in the data. This has important practical implications, suggesting that setting $\tau_{\max}$ to larger values may provide more robust results since the right lag size is unknown when processing real data.

\begin{figure}[htbp]
    \centering
    \includegraphics[width=0.9\linewidth]{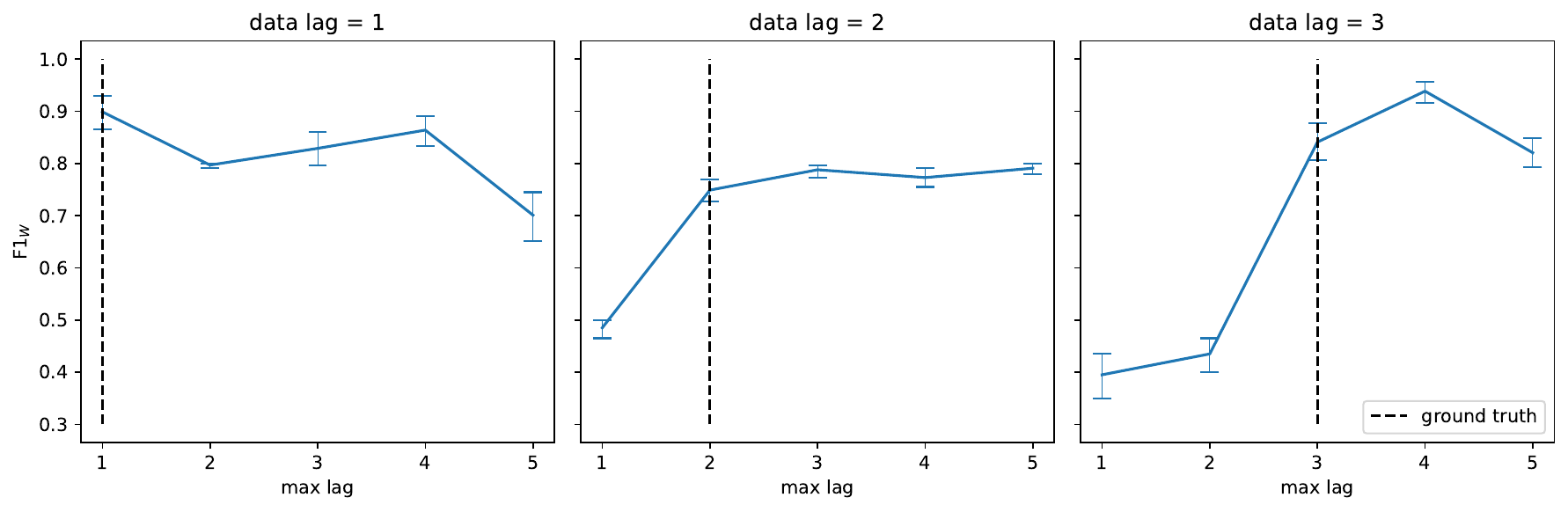}
    \caption{The relationship between prediction performance ($F1_W$) and the maximum lag (\textit{max lag}) hyperparameter in \METHODNAME. Each setting (subfigure) includes one type of lag in the data (\textit{data lag}), which is also highlighted by the vertical \textit{ground truth} line. Error bars denote 95\% confidence intervals.}
    \label{fig:ablation_nlag}
\end{figure}
}

\newpage
\rev{
\subsection{Soft-voting threshold}
Here we investigate the sensitivity of \METHODNAME\ to the soft-voting hyperparameter $\theta$. We run simulations with 3-node graphs, $T{=}5000$, $\tau=1$, all repeated 100 times. Figure~\ref{fig:ablation_softvote} presents the results.

The main observation is a clear trend, in which performance consistently degrades as we increase the threshold. This is consistent with the behaviour we expect as increasing the threshold requires more votes from each ordering for a graph edge to be present, which inevitably leads to increased false negatives and decreased recall (middle subfigure).

Interestingly, union of all orderings ($\theta=0$) provides the best results, and which is notably free from false positives (i.e.~perfect precision as in the right-hand-side subfigure). This result shows importance and effectiveness of CAM pruning since without it the number of false positives would likely be very high. This observation also makes $\theta=0$ the recommended default value.

\begin{figure}[htbp]
    \centering
    \includegraphics[width=0.9\linewidth]{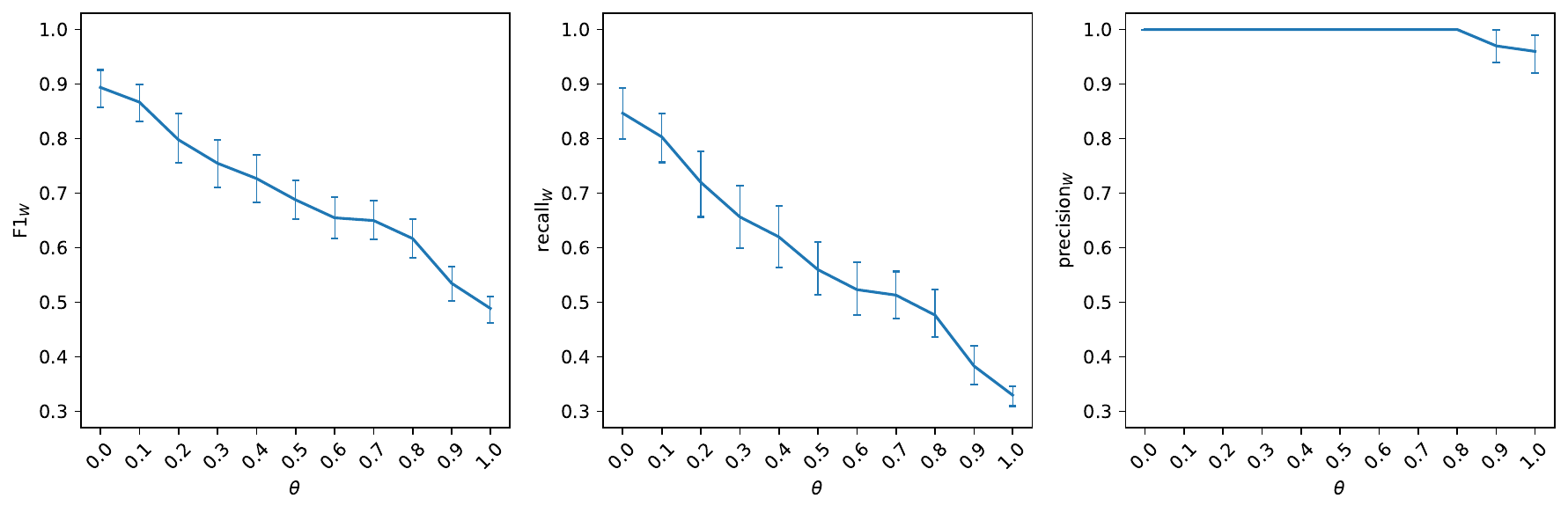}
    \caption{The relationship between prediction performance and the soft-voting threshold $\theta$ hyperparameter in \METHODNAME. Error bars denote 95\% confidence intervals.}
    \label{fig:ablation_softvote}
\end{figure}
}

\newpage
\rev{
\subsection{Diversity of orderings}
To quantify the diversity of orderings produced during a training run, we computed pairwise distances between all permutations using the Kendall tau metric. This distance reflects the proportion of pairwise disagreements in item ordering between two permutations and is widely used for comparing ranked lists. By converting each permutation into its corresponding rank vector and computing normalized Kendall distances, we obtain a symmetric distance matrix that captures how similar or dissimilar each ordering is to every other within the same run.

Figure~\ref{fig:ablation_kendall} visualises this matrix as a heatmap. Blocks of darker regions indicate sets of orderings that remain relatively consistent, while lighter bands correspond to greater variability across orderings. The structured patterns suggest that the model explores multiple, partially overlapping ranking modes rather than collapsing to a single consensus. In particular, similar k values yield similar orderings as shown by the block patterns on the diagonals.

In Figure~\ref{fig:ablation_orderings} we further investigate predictive performance of individual orderings at different noise scale values $k$. Each ordering was passed through CAM pruning to get the final performance measure against the true DAG. The figure implies that different noise scales provide different edge-detection capabilities, especially in the $[0,30]$ range. Importantly, however, none of the orderings achieves individually as high of a performance as when all orderings at different noise scales are combined into a single robust prediction.

\begin{figure}[htbp]
    \centering
    \includegraphics[width=0.4\linewidth]{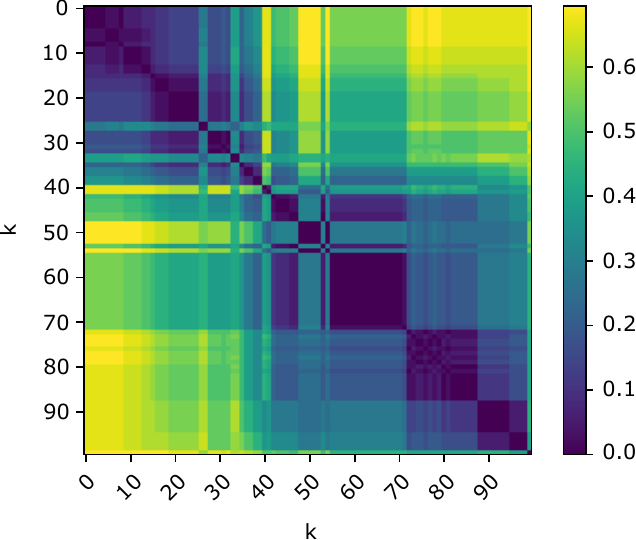}
    \caption{Kendal distance between different orderings obtained at different noise scale values $k$.}
    \label{fig:ablation_kendall}
\end{figure}

\begin{figure}[htbp]
    \centering
    \includegraphics[width=0.9\linewidth]{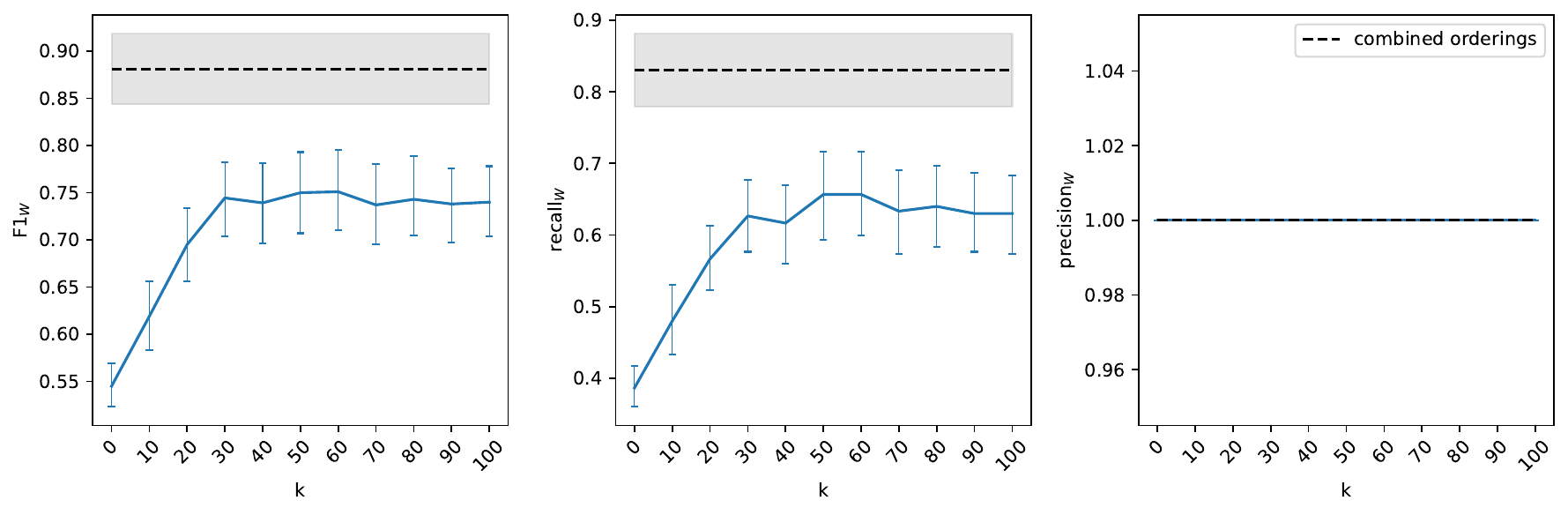}
    \caption{Performance of individual orderings obtained at different noise scales $k$ compared to the performance achieved with combined orderings. Results are averages over 100 repetitions. Error bars and shaded grey areas denote 95\% confidence intervals.}
    \label{fig:ablation_orderings}
\end{figure}

}

\newpage
\section{Experimental Details}
\subsection{Hyperparameters}
We summarise the most important hyperparameters used in conjunction with the methods in Table~\ref{tab:app_hyper}. For the rest of the hyperparameters not mentioned here, we deferred to their default values, which can be found in their respective implementations as per Table~\ref{tab:app_code}.

In addition, most methods have a hyperparameter that corresponds to the maximum lag size, though can be named differently in their implementations ($\tau_{max}$ in PCMCI/PCMCI+, \emph{lags} in VARLiNGAM, \emph{p} in DYNOTEARS, \emph{max\_lag} in TiMINo). This hyperparameter can have a non-trivial effect on method's performance. In order to decrease our experiments' dependence on hyperparameter tuning, we set this hyperparameter to the true lag value $\tau$ in the simulations since we have access to the ground truth. In real data settings (CausalTime), we set the value to 1. Note that some methods (CAM, SCORE, DAS, NoGAM, DiffAN, DOTS) implement this hyperparameter implicitly by creating $\tau \times d$ lagged variables based on provided data.

While sampling 10 causal orderings in \METHODNAME\ is a reasonable default (n\_ord), we increase it to 20 when processing CausalTime to better showcase our method's potential.

\begin{table}[hp]
    \centering
    \caption{Summary of hyperparameters of all methods used in the experiments.}
    \begin{tabular}{l|c}
        \toprule
        \bf Method & \bf Hyperparameters \\
        \midrule
        CAM & $\text{alpha} = 0.05$ \\
        SCORE & $\alpha = 0.05, \eta_G=0.001, \eta_H = 0.001$ \\
        DAS & $\alpha = 0.05, \eta_G=0.001, \eta_H = 0.001$ \\
        NoGAM & $\alpha = 0.05, \eta_G=0.001, \eta_H = 0.001$ \\
        {} & $\text{ridge}_\alpha = 0.01, \text{ridge}_\gamma = 0.1$ \\
        PCMCI/PCMCI+ & $\alpha = 0.05, \text{test} \in \{ \text{par\_corr, cmi\_knn} \}, \tau_{min}=1$ \\
        VARLiNGAM & $\alpha = 0.05, \text{criterion} = \text{bic}, \text{prune} = \text{true}$ \\
        DYNOTEARS & $\lambda_w = 0.05, \lambda_a = 0.05, \text{w\_threshold} = 0.01$ \\
        TCDF & $\text{epochs}=5000, \text{layers}=2, \text{lr}=0.01$ \\
        {} & $\text{kernel\_size}=4, \text{dilation}=4, \text{significance}=0.8$ \\
        DiffAN & $\text{steps}=100, \text{nn\_depth}=3, \text{batch\_size}=1024$ \\
        {} & $\text{early\_stop}=300, \text{lr}=0.001$ \\
        TiMINo & $\alpha = 0.05$ \\
        \METHODNAME & $\text{steps}=100, \text{nn\_depth}=3, \text{batch\_size}=1024$ \\
        {} & $\text{early\_stop}=300, \text{lr}=0.001, \text{n\_ord}=10$ \\
        \bottomrule
    \end{tabular}
    
    \label{tab:app_hyper}
\end{table}

\subsection{Implementations of algorithms}

\begin{table}[hp]
    \centering
    \caption{Summary of source code used to run the methods in the experiments.}
    \begin{tabular}{l|c}
        \toprule
        \bf Method & \bf Source \\
        \midrule
        CAM/SCORE/DAS/NoGAM & \texttt{dodiscover}: \footnotesize \url{https://github.com/py-why/dodiscover} \\
        PCMCI/PCMCI+ & \texttt{tigramite}: \footnotesize \url{https://github.com/jakobrunge/tigramite} \\
        VARLiNGAM & \texttt{lingam}: \footnotesize \url{https://github.com/cdt15/lingam} \\
        DYNOTEARS & \texttt{causalnex}: \footnotesize \url{https://github.com/mckinsey/causalnex} \\
        TCDF & \footnotesize \url{https://github.com/M-Nauta/TCDF} \\
        DiffAN & \footnotesize \url{https://github.com/vios-s/DiffAN} \\
        TiMINo & \tiny \url{https://github.com/ckassaad/causal_discovery_for_time_series/blob/master/baselines/scripts_R/timino.R} \\
        \bottomrule
    \end{tabular}
    
    \label{tab:app_code}
\end{table}

\end{document}